\documentclass{article}

\usepackage{microtype}
\usepackage{graphicx}
\usepackage{subcaption}
\usepackage{booktabs} 

\usepackage{amsmath,amsfonts,bm}

\def\eqref#1{equation~\ref{#1}}

\def\1{\bm{1}}

\def\vzero{{\bm{0}}}

\def\vepsilon{{\bm{\epsilon}}}

\def\vphi{{\bm{\phi}}}

\def\vf{{\bm{f}}}

\def\vu{{\bm{u}}}
\def\vv{{\bm{v}}}

\def\vx{{\bm{x}}}
\def\vy{{\bm{y}}}
\def\vz{{\bm{z}}}

\def\mI{{\bm{I}}}

\DeclareMathAlphabet{\mathsfit}{\encodingdefault}{\sfdefault}{m}{sl}
\SetMathAlphabet{\mathsfit}{bold}{\encodingdefault}{\sfdefault}{bx}{n}

\def\gD{{\mathcal{D}}}

\def\gN{{\mathcal{N}}}

\def\gS{{\mathcal{S}}}

\def\gU{{\mathcal{U}}}

\newcommand{\E}{\mathbb{E}}

\newcommand{\R}{\mathbb{R}}

\newcommand{\KL}{D_{\mathrm{KL}}}
\newcommand{\Var}{\mathrm{Var}}

\usepackage{hyperref}

\usepackage[accepted]{icml2026}

\usepackage{amsmath}
\usepackage{amssymb}
\usepackage{mathtools}
\usepackage{amsthm}
\usepackage{nicefrac}       
\usepackage{microtype}      
\usepackage{xcolor}         
\usepackage{colortbl}
\definecolor{catgray}{gray}{0.92}
\usepackage{multirow}

\usepackage[capitalize,noabbrev]{cleveref}

\theoremstyle{plain}
\newtheorem{theorem}{Theorem}[section]
\newtheorem{proposition}[theorem]{Proposition}
\newtheorem{lemma}[theorem]{Lemma}

\theoremstyle{definition}
\newtheorem{definition}[theorem]{Definition}

\theoremstyle{remark}

\usepackage[textsize=tiny]{todonotes}

\icmltitlerunning{Causal Forcing: Autoregressive Diffusion Distillation Done Right for High-Quality Real-Time Interactive Video Generation}

\begin{document}

\twocolumn[
  \icmltitle{Causal Forcing: Autoregressive Diffusion Distillation Done Right for \\
  High-Quality Real-Time Interactive Video Generation}



  \icmlsetsymbol{equal}{*}

\begin{icmlauthorlist}
\icmlauthor{Hongzhou Zhu}{equal,1,2}
\icmlauthor{Min Zhao}{equal,1,2}
\icmlauthor{Guande He}{3}
\icmlauthor{Hang Su}{1}
\icmlauthor{Chongxuan Li}{4,5,6}
\icmlauthor{Jun Zhu}{1,2}
\end{icmlauthorlist}

\icmlaffiliation{1}{Dept. of Comp. Sci. \& Tech., BNRist Center, THU-Bosch ML Center, Tsinghua University.}
\icmlaffiliation{2}{ShengShu.}
\icmlaffiliation{3}{The University of Texas at Austin.}
\icmlaffiliation{4}{Gaoling School of Artificial Intelligence Renmin University of China Beijing, China.}
\icmlaffiliation{5}{Beijing Key Laboratory of Research on Large Models and Intelligent Governance.}
\icmlaffiliation{6}{Engineering Research Center of Next-Generation Intelligent Search and Recommendation, MOE.}

\icmlcorrespondingauthor{Jun Zhu}{dcszj@tsinghua.edu.cn}

  \vskip 0.3in
]



\printAffiliationsAndNotice{\icmlEqualContribution}

\begin{abstract}

To achieve real-time interactive video generation, current methods distill pretrained bidirectional video diffusion models into few-step autoregressive (AR) models, facing an \emph{architectural gap} when full attention is replaced by causal attention. However, existing approaches do not bridge this gap theoretically. They initialize the AR student via ODE distillation, which requires \emph{frame-level injectivity}, where each noisy frame must map to a unique clean frame under the PF-ODE of an \emph{AR teacher}. Distilling an AR student from a bidirectional teacher violates this condition, preventing recovery of the teacher's flow map and instead inducing a conditional-expectation solution, which degrades performance. To address this issue, we propose \emph{Causal Forcing}, which uses an autoregressive teacher for ODE initialization to bridge the architectural gap, and then applies the same DMD procedure as in Self Forcing. Empirical results show that our method outperforms all baselines across all metrics, surpassing the SOTA Self Forcing by 19.3\% in Dynamic Degree, 8.7\% in VisionReward, and 16.7\% in Instruction Following. Project page: \href{https://thu-ml.github.io/CausalForcing.github.io/}{https://thu-ml.github.io/CausalForcing.github.io/}; the code: \href{https://github.com/thu-ml/Causal-Forcing}{https://github.com/thu-ml/Causal-Forcing}.
\end{abstract}

\begin{figure*}[h]
    \centering
    \includegraphics[width=.95\textwidth]{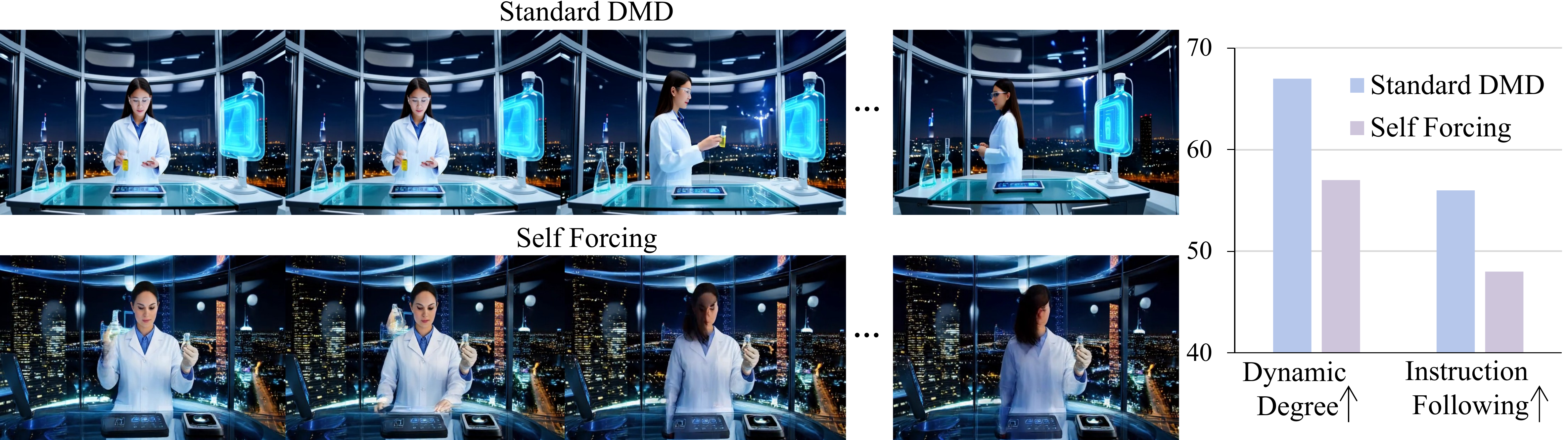}
    \caption{\textbf{Limitations of existing methods.} While distilling from the same bidirectional base model, SOTA \emph{autoregressive} diffusion distillation methods like Self-Forcing still lag significantly behind standard DMD, which distills a \emph{bidirectional} student.
    }
    \label{fig:bi_dmd_vs_sf}
\end{figure*}

\section{Introduction}

Recent years have witnessed rapid progress in autoregressive (AR) video diffusion models~\cite{jin2024pyramidal,teng2025magi,chen2025skyreels,wu2025pack}. By adopting a frame-level autoregressive formulation with diffusion within each frame, AR video diffusion enables a wide range of real-time and interactive applications, including world modeling~\cite{mao2025yume,sun2025worldplay,hong2025relic}, game simulation~\cite{genie3,tang2025hunyuan}, embodied intelligence~\cite{feng2025vidarc}, and interactive content creation~\cite{shin2025motionstream,huang2025live,ki2026avatar,xiao2025knot}. Despite their promise, the computational burden of multi-step diffusion sampling severely limits their real-time capabilities.

To alleviate this latency bottleneck, recent works~\cite{huang2025self,yin2025slow} distill a powerful pretrained \emph{bidirectional} video diffusion model into a few-step \emph{autoregressive} student model. 
This is typically achieved via a two-stage pipeline: an initial ODE distillation to initialize the AR student, followed by DMD~\cite{yin2024one} to further boost performance. However, compared to standard step-distillation, such AR distillation faces a more fundamental challenge beyond the shared sampling-step gap, namely, the \emph{architectural gap}.
This gap arises from converting a bidirectional model, which has access to future frames, into a causal architecture that conditions solely on past context.
Empirically, we find that even when distilled from the same bidirectional teacher, SOTA AR distillation methods~\cite{huang2025self} still lag significantly behind standard DMD, which distills a bidirectional student (see Fig.~\ref{fig:bi_dmd_vs_sf}).

In this paper, we show that the performance degradation stems from the failure of existing methods to properly address the architectural gap theoretically (see Fig.~\ref{fig:injectivity_of_ode_distill} and Sec.~\ref{sec:analysis} ). Through a controlled experiment, we first show that this gap cannot be resolved by the DMD stage and should instead be addressed during the preceding ODE initialization. Crucially, a key requirement for ODE distillation is \emph{injectivity}~\cite{liu2022flow}. In standard ODE distillation that distills a bidirectional teacher into a bidirectional student, injectivity naturally holds at the video level. In contrast, for an AR student, injectivity must hold at the frame level: each noisy frame must map to a \emph{unique} clean frame under the PF-ODE of the \emph{AR teacher}. We refer to this requirement as \emph{frame-level injectivity}. However, existing methods~\cite{huang2025self,yin2025slow} distill an AR student directly from a bidirectional teacher, allowing the \emph{same} noisy frame
to correspond to multiple \emph{different} clean frames. 
This violation of frame-level injectivity results in blurred and inconsistent video generation.

Building on the above analysis, we propose \emph{Causal Forcing}, which bridges the architectural gap by performing \emph{ODE distillation initialization with an AR teacher} (see Sec.~\ref{sec:method_ours}). We first train an AR diffusion model using teacher forcing, where we show that diffusion forcing is inferior to teacher forcing for AR diffusion training both theoretically and empirically. With this AR diffusion model as the teacher, we then perform \emph{causal ODE distillation} by sampling its PF-ODE trajectories and training the AR student accordingly. 
Crucially, since the teacher is autoregressive rather than bidirectional, its PF-ODE naturally satisfies frame-level injectivity, enabling the student to accurately learn the flow map. Finally, we apply the
same DMD procedure as in Self Forcing to obtain a few-step AR student with high quality, enabling high-quality efficient real-time video generation.

To validate our approach, we conduct comprehensive evaluations against various baseline models~\cite{wan2025wan,hacohen2024ltx,deng2024autoregressive,jin2024pyramidal,chen2025skyreels,teng2025magi,yin2025slow,huang2025self}. Experiments show that our method consistently outperforms all baselines across all metrics, with significant gains in dynamic degree, visual quality, and instruction-following capability. Remarkably, under the same training budget as existing distilled autoregressive video models, it surpasses the SOTA Self-Forcing~\cite{huang2025self} baseline by 19.3\% in Dynamic Degree~\cite{huang2024vbench}, 8.7\% in VisionReward~\cite{xu2024visionreward}, and 16.7\% in Instruction Following, while maintaining the same inference latency, demonstrating the effectiveness of our method.

\begin{figure*}
    \centering
    \includegraphics[width=.95\textwidth]{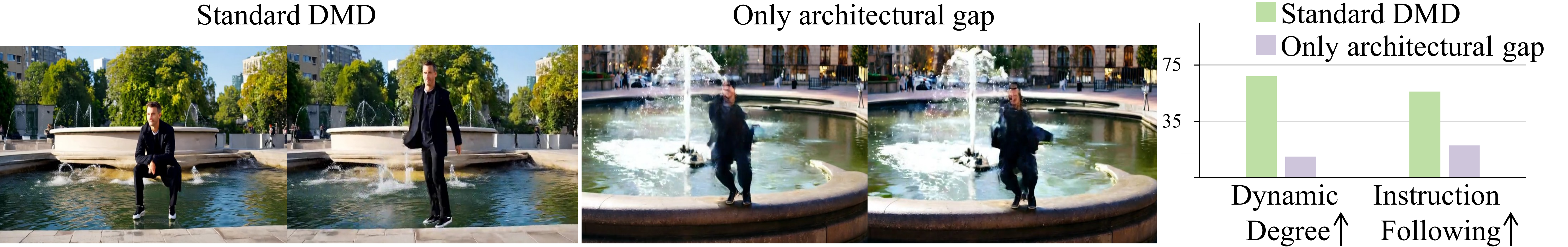}

    \caption{\textbf{DMD fails to bridge the architectural gap.} Initializing the autoregressive student with standard DMD removes the sampling-step gap and isolates the architectural gap, yet still underperforms standard DMD. This indicates that the architectural gap cannot be resolved by the DMD stage and should instead be addressed during the preceding ODE initialization.
    }
    \label{fig:no_init_and_bi_dmd_init_a.pdf}
\end{figure*}

\section{Background}
\label{sec:bg}

\subsection{Diffusion Models}
Diffusion models~\cite{ho2020denoising,song2020score} gradually perturb data $\vx_0 \sim p_{\text{data}}(\vx_0)$ through a forward diffusion
process to learn its distribution. This process follows a transitional kernel $q_{t|0}(\vx_t|\vx_0)$ given by $
    \vx_t = \alpha_t \vx_0 + \sigma_t \vepsilon, \,\vepsilon\sim \gN(\vzero, \mI),\,t\in [0,T],$
where $\alpha_t,\sigma_t$ are the predefined noise schedule. To match the data distribution, the model can be trained under a variety of parameterizations~\cite{ho2020denoising,kingma2023understanding,salimans2022progressive}. A typical parameterization is flow matching~\cite{lipman2022flow}, which uses the velocity prediction. The model $\vv_\theta$ is trained to minimize the weighted mean square error $\E_{\vx_0,\vepsilon,t}[w(t)||\vv_\theta(\vx_t,t)-\vv_t||^2]$. Under a typical noise schedule~\cite{liu2022flow} that $\alpha_t = 1-t, \sigma_t = t$ and $T=1$, $\vv_t$ is defined by
$\vv_t:=\frac{\text{d}\vx_t}{\text{d}t} = \vepsilon - \vx_0.$ At this point, sampling can be done by solving the probability flow ordinary differential equation (PF-ODE)~\cite{song2020score} 
\begin{align}
\label{eq:pf_ode}
\text{d}\vx_t = \vv_\theta(\vx_t,t) \text{d}t, \quad \vx_T\sim\gN(\vzero,\mI),\quad t: T\rightarrow 0. 
\end{align}

\subsection{Autoregressive Video Diffusion Models}
\label{sec:ar_diffusion}
Despite their success in video generation~\cite{yang2024cogvideox,kong2024hunyuanvideo}, full-sequence diffusion models generate all frames in a single shot, preventing user interaction. Autoregressive (AR) video diffusion models instead generate frames sequentially, 
aiming to model the distribution of $N$-frame videos via an autoregressive factorization\footnote{In practice, generation is typically performed in chunks rather than frame-by-frame. We omit this detail here for simplicity.} $p_\theta(\vx_0^{1:N}) = \prod_{i=1}^{N} p_\theta(\vx_0^i \mid \vx_0^{<i}),$
where each conditional distribution $p_\theta(\vx_0^i \mid \vx_0^{<i})$ is modeled by standard diffusion. This mechanism enables users to steer subsequent frames based on the generated content, thereby enabling interactivity, exemplified by Google's Genie 3~\cite{genie3}.

To achieve this, two typical training strategies can be adopted, namely teacher forcing (TF)~\cite{jin2024pyramidal} and diffusion forcing (DF)~\cite{chen2024diffusion,song2025history}.
TF aims to learn $p_{\text{data}}(\vx_0^i \mid {\vx_0}^{<i})$ conditioning on a clean prefix of past frames $ {\vx_0}^{<i}$.
To enable this training paradigm in practice, a commonly used strategy~\cite{teng2025magi} is concatenating the clean video with its noisy counterpart and applying a causal attention mask, so that $\vx_t^i$ can attend to ${\vx_0}^{<i}$. In contrast, DF targets the noisy-conditioned distribution $p_{\text{DF}}(\vx_0^i \mid {\vx_t}^{<i})$, with noise added independently to each frame via $q_{t|0}({\vx_t}^{<i}\mid {\vx_0}^{<i})$. Rather than feeding a clean prefix as in TF, DF lets $\vx_t^i$ directly attend to the noisy prefix ${\vx_t}^{<i}$. See more related work in Appendix~\ref{sec:appendix_related_work}.

\subsection{Consistency Distillation and ODE Distillation}
\label{sec:cd}
To enable real-time generation, multi-step diffusion models are typically distilled into few-step models. A typical approach is Consistency Distillation (CD)~\cite{song2023consistency, song2023improved}. This is achieved by learning a flow map $G_\theta: (\vx_t,t) \mapsto \vx_0$ that maps $\vx_t$ to the clean endpoint $\vx_0$ of the teacher diffusion model’s PF-ODE. Under the boundary condition $G_\theta(\vx,0)\equiv \vx$, the model $G_\theta$ can be trained by minimizing $\E_{\vx_0,\vepsilon,t}[w(t) d(G_\theta(\vx_t,t),G_{\theta^-}(\hat{\vx}_{t-\Delta t}, t-\Delta t))],$ where $\vx_t$ is obtained via the forward diffusion process, $\hat{\vx}_{t-\Delta t}$ is obtained by solving one ODE step from $\vx_t$ using the teacher diffusion model, $\theta^-$ is a running average of $\theta$ with stop-gradient, and $d(\cdot,\cdot)$ is the distance under a chosen norm. Recent works~\cite{lu2024simplifying, geng2025mean, zheng2025large} have further improved the method.

Recent works on real-time interactive video generation~\cite{yin2025slow, huang2025self} adopt a simplified variant of CD, which trains the student $G_\theta$ with direct regression: $\theta^* = \min_\theta \E_{t,\vx_t}[||G_\theta(\vx_t, t)-\vx_0||^2],$
where $\vx_t$ and $\vx_0$ lie on the same PF-ODE trajectory of the teacher model. We refer to this method as \textit{ODE distillation} in the sequel.

\subsection{Score Distillation}
\label{sec:dmd}
Score distillation~\cite{wang2023prolificdreamer, luo2023diff} distills a multi-step diffusion model into a few-step student model by matching the student’s generative distribution $p_\theta (\tilde{\vx})$ to that of the data. A commonly used instantiation is Distribution Matching Distillation (DMD)~\cite{yin2024one}, which minimizes the KL divergence between the student and data distributions by descending along its gradient
\begin{align}
\label{eq:dmd}
\nabla_\theta&\E_t[\KL(p_{\theta,t}||p_{\text{data},t})]\notag\\
   &= -\E_{\tilde{\vx}, t, \tilde{\vx}_t}[(s_\text{real}(\tilde{\vx}_t,t)-s_\text{fake}(\tilde{\vx}_t,t))\frac{\partial \tilde{\vx}}{\partial \theta}],
\end{align}
where $\tilde{\vx}\sim p_\theta(\tilde{\vx})$ is generated by the student, and $\tilde{\vx}_t
\sim q_{t|0}(\tilde{\vx}_t|\tilde{\vx})$ is a noised version of $\tilde{\vx}$ that induces the distribution $p_{\theta,t}(\tilde{\vx}_t)$. Herein, a frozen diffusion model $s_{\text{real}}$ is used to predict the score of $\tilde{\vx}_t$ under the noisy data distribution $p_{\text{data},t}(\tilde{\vx}_t)$, while an online-trainable diffusion model $s_{\text{fake}}$ predicts the score under $p_{\theta,t}(\tilde{\vx}_t)$.

\section{Method}

\subsection{Limitations of Existing Methods}
\label{sec:motivation}

 As described in Sec.~\ref{sec:ar_diffusion}, real-time interactive video generation requires a few-step autoregressive generator. The most widely used strategy, exemplified by CausVid~\cite{yin2025slow} and Self Forcing~\cite{huang2025self}, adopts asymmetric distillation: given a pretrained bidirectional video diffusion model, one distills a few-step autoregressive student generator. Compared to standard step-distillation, beyond the shared \emph{sampling-step gap}, i.e., reducing multi-step sampling to few-step sampling, a more fundamental challenge lies in the \emph{architectural gap}: converting a bidirectional model with full attention~\cite{peebles2023scalable} into a causal attention architecture that conditions solely on past context, with no access to future frames.

Although the current state-of-the-art (SOTA) in autoregressive video diffusion distillation, Self Forcing~\cite{huang2025self}, achieves strong performance, it still falls short of standard DMD, which distills a few-step bidirectional student from a bidirectional video diffusion model. As shown in Fig.~\ref{fig:bi_dmd_vs_sf}, Self Forcing is substantially worse than standard DMD~\cite{yin2024one} in terms of vision quality, dynamic degree, and instruction following. This gap suggests that existing autoregressive diffusion distillation pipelines remain suboptimal, motivating a further investigation of the underlying causes and more effective strategies.

\begin{figure*}[t]
    \centering
    \includegraphics[width=\textwidth]{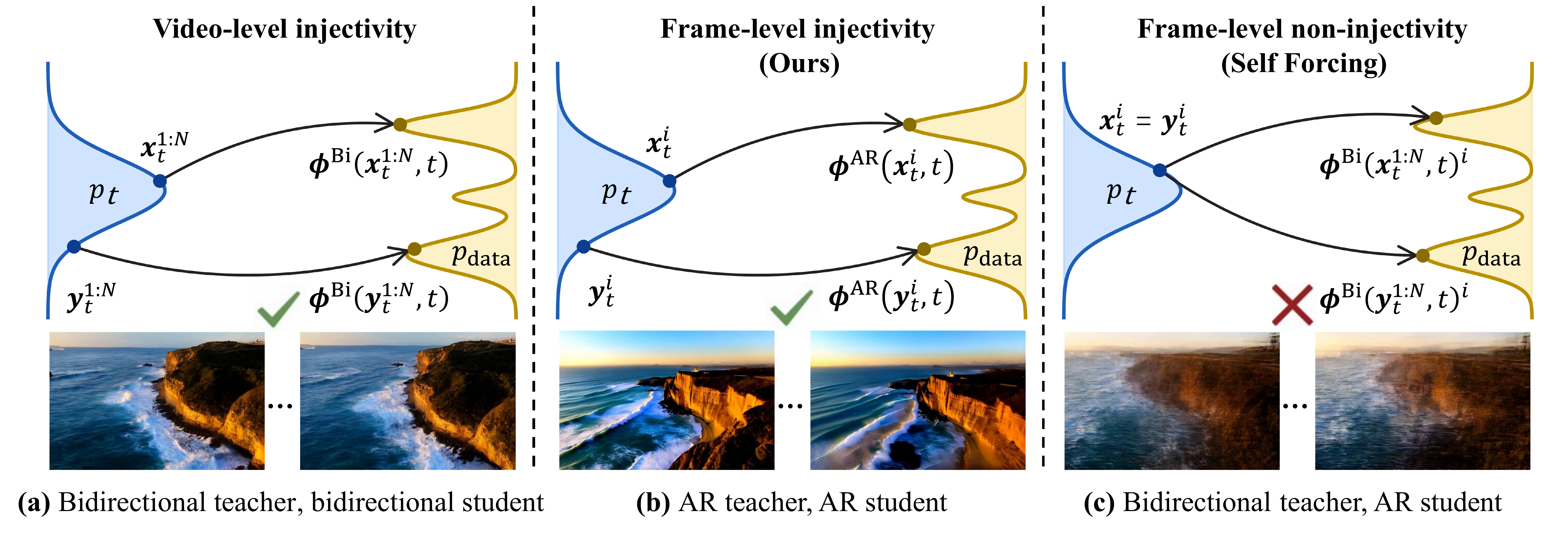}
\vspace{-.75cm}
\caption{\textbf{Necessary principle for ODE initialization and why Self Forcing is flawed.} 
ODE distillation requires injective paired data. (a) Standard ODE distillation, which distills a \emph{bidirectional} teacher to a \emph{bidirectional} student, satisfies this requirement at the video level. (b) For an \emph{AR} student, injectivity must hold at the frame level: each noisy frame maps to a unique clean frame via the PF-ODE of the \emph{AR} teacher. (c) In contrast, Self Forcing distills an \emph{AR} student from a \emph{bidirectional} teacher, where the same noisy frame corresponds to multiple distinct clean frames, violating
frame-level injectivity and results in blurred videos after ODE distillation. See Sec.~\ref{sec:analysis} for details.}
\label{fig:injectivity_of_ode_distill}
\end{figure*}

\subsection{Analysis: Suboptimality of Existing Methods}
\label{sec:analysis}

In this section, we analyze the reason for the performance degradation observed in existing methods and identify the key principle required to address it. 

\textbf{Recap of the Self Forcing Pipeline.} The current SOTA Self Forcing~\cite{huang2025self} adopts a two-stage distillation strategy. Given a bidirectional diffusion model, it first applies an ODE distillation to bridge the architectural gap and enable few-step sampling. Specifically, the bidirectional model samples along its PF-ODE trajectory, and the target autoregressive model $G_\theta$ learns to map noisy intermediates to the clean video. The training objective is
\begin{align}
\label{eq:ode}
    \theta^* = \min_\theta \E_{t, \vx_t^{1:N},\, i}[||G_\theta(\vx_t^i,\vx_t^{<i}, t)-\vx_0^i||^2],
\end{align}
where $i\sim\gU(1, N)$, $(\vx_t^{1:N},\vx_0^{1:N})$ lie on the same PF-ODE trajectory of the bidirectional teacher model, and $t \in \gS$ denotes a predefined set of timesteps used for sampling. Building on this ODE distillation stage, it then further applies asymmetric DMD, using the resulting model to initialize the autoregressive student and the bidirectional base model as the teacher. In what follows, we examine how each stage contributes to closing the architectural gap and whether it is theoretically well aligned with this objective.

\textbf{DMD stage in Self Forcing does not address the architectural gap.} We first examine whether the DMD stage can bridge the architectural gap. We initialize the autoregressive student with a few-step bidirectional model distilled via standard DMD, which removes the sampling-step gap while retaining only the architectural gap. As shown in Fig.~\ref{fig:no_init_and_bi_dmd_init_a.pdf}, despite eliminating the sampling-step gap, performance remains significantly worse than the standard DMD. This indicates that a large architectural gap at initialization cannot be resolved by the subsequent DMD stage. Consequently, in the two-stage design of Self Forcing, \emph{it is the ODE distillation stage that is expected to bridge the architectural gap}. We next analyze its theoretical soundness. 

\paragraph{Frame-level injectivity as a necessary principle for ODE initialization.} We begin by identifying the necessary condition that ODE distillation must satisfy. For regressive MSE-loss-based ODE distillation to be well-defined, the paired data must be injective~\cite {liu2022flow}, meaning that at any timestep, each noisy sample corresponds to a \textit{unique} clean sample in the sample space. In the setting where a bidirectional student is distilled from a bidirectional teacher, this injectivity naturally holds by nature at the video level due to the injectivity of diffusion PF-ODE. Formally, for any noisy video \(\vx_t^{1:N}\), there exists a \textit{unique} clean video \(\vx_0^{1:N}\) in the sample space, such that \(\vx_0^{1:N}=\vphi^{\mathrm{\scriptscriptstyle \mathrm{Bi}}}(\vx_t^{1:N},t)\), where \(\vphi^{\mathrm{\scriptscriptstyle \mathrm{Bi}}}:(\vx_t^{1:N},t)\mapsto \vx_0^{1:N}\) denotes the PF-ODE flow map of the bidirectional model (see Fig.~\ref{fig:injectivity_of_ode_distill}a), and is exactly what the student model learns to fit. However, in autoregressive video models, frames are generated sequentially. This shifts the injectivity requirement from the entire video $\vx_0^{1:N}$ to each individual frame $\vx_0^{i}$, $i \in \{1,\dots,N\}$. Specifically, for any noisy frame $\vx_t^{i}$, there must exist a \textit{unique} corresponding clean frame $\vx_0^{i}$ in the sample space such that $\vx_0^{i}=\vphi^{\mathrm{\scriptscriptstyle \mathrm{AR}}}(\vx_t^{i},t)$,  where $\vphi^{\mathrm{\scriptscriptstyle \mathrm{AR}}}:(\vx_t^{i},t)\mapsto \vx_0$ denotes PF-ODE flow map of the autoregressive diffusion model that the student model learns (see Fig.~\ref{fig:injectivity_of_ode_distill}b). We formalize this requirement below:

\begin{definition}[Frame-level injectivity] For the mapping $\vphi^{\mathrm{\scriptscriptstyle AR}}:(\vx_t^i,t)\mapsto \vx_0^i$, frame-level injectivity holds if $\forall t \in (0,1]$, for any two noisy videos $\{\vx_t^j\}_{j=1}^{N}, \{\vy_t^j\}_{j=1}^{N}$:
\begin{align}
\label{eq:frame-wise injectivity}
\forall i \in [N],\vx_{t}^i= \vy_{t}^i
\Rightarrow
\vphi^{\mathrm{\scriptscriptstyle \mathrm{AR}}}(\vx_{t}^i,t) = \vphi^{\mathrm{\scriptscriptstyle \mathrm{AR}}}(\vy_{t}^i,t),
\end{align}
i.e., $\vphi^{\mathrm{\scriptscriptstyle AR}}(\vx_t^i,t)$ is a well-defined function that maps $\vx_t^i$ to the $i$-th clean frame (we omit the conditional history frames for brevity).

\end{definition}

If the condition in Eq.~(\ref{eq:frame-wise injectivity}) is violated, the regressive student trained via Eq. (\ref{eq:ode}) cannot recover the teacher’s flow map, but instead collapses to the conditional expectation~\cite{bishop2006pattern}:
\begin{align}
\label{eq:collapse}
    G_\theta^*(\vx_t^i,\vx_t^{<i},t)=\E[\vx_0|\vx_t^i,\vx_t^{<i},t].
\end{align}
Intuitively, learning a conditional mean averages over frames, which manifests as blurred visual results.

\paragraph{Current ODE initialization in Self Forcing violates frame-level injectivity.} Current ODE distillation in Self Forcing employs a \emph{bidirectional} teacher to distill an \emph{autoregressive} student. We show that this design violates the frame-level injectivity condition in Eq.~(\ref{eq:frame-wise injectivity}), rendering the distillation fundamentally flawed.

As discussed above, the PF-ODE trajectory induced by a bidirectional model is injective only at the \emph{video level}, but not at the \emph{frame level}. We theoretically demonstrate that this leads to a non-negligible probability that the same noisy frame $\vx_t^i$ corresponds to multiple distinct clean frames (see Fig.~\ref{fig:injectivity_of_ode_distill}c and Lemma~\ref{lemma:non_injectivity_informal}). Formally, for a fixed timestep $t$, there exist \(\vx_t^i=\vy_t^i\) yet \(\vx_0^i\neq \vy_0^i\) in the paired data sample space, and such collisions occur on a set of non-zero measure. Consequently, the frame-level injectivity condition in Eq.~(\ref{eq:frame-wise injectivity}) is violated with non-negligible probability. This shows that Self Forcing's ODE distillation, which trains an autoregressive student from a bidirectional teacher, is theoretically misaligned. We formalize this issue in the following lemma:
\begin{lemma}[Frame-level non-injectivity of PF-ODE, informal]
\label{lemma:non_injectivity_informal}
Let $\vx_t^{1:N}$ satisfy the PF-ODE in Eq.~(\ref{eq:pf_ode}) of a bidirectional diffusion model. Denote $\vx_t^i$ as its $i$-th frame, and let $\vx_t^{\mathrm{other}} := \vx_t^{[N]\setminus\{i\}}$ denote the remaining frames. Define the flow map $\vphi^{\mathrm{\scriptscriptstyle \mathrm{Bi}}}:(\vx_t^{1:N},t)\mapsto \vx_0^{1:N}$. If $\vphi^{\mathrm{\scriptscriptstyle \mathrm{Bi}}}(\vx_t^{1:N},t)^{i}$ is not a.e.\ constant with respect to $\vx_t^{\mathrm{other}}$, then
\begin{align}
\forall &t\in(0,1],\ \forall \vx_{t}^{1:N}\in\mathbb{R}^d,\ \exists \,\vy_{t}^{1:N}\in\mathbb{R}^d,\text{such that}\notag\\
&\vy_{t}^{i}=\vx_{t}^{i}, \,\text{and} \,\vphi^{\mathrm{\scriptscriptstyle \mathrm{Bi}}}(\vx_{t}^{1:N},t)^{i}\ne \vphi^{\mathrm{\scriptscriptstyle \mathrm{Bi}}}(\vy_{t}^{1:N},t)^{i}.
\end{align}
Moreover, $\mathbb{P}\!\left(\mathrm{Var}\!\left(\vphi^{\mathrm{\scriptscriptstyle \mathrm{Bi}}}(\vx_t^{1:N},t)^{i}\mid \vx_t^{i},t\right)>0\right)>0.$ For a rigorous formalization and proof, see Appendix~\ref{sec:appendix_proof_asy_ode_is_wrong}.
\end{lemma}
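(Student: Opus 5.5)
Our plan is to fix $t\in(0,1]$ and the frame index $i$, and to view the $i$-th output of the flow map as a function of the remaining frames with the $i$-th frame held fixed, $g_{\vx_t^i}(\vz):=\vphi^{\mathrm{\scriptscriptstyle Bi}}\big((\vx_t^i,\vz),t\big)^i$ for $\vz\in\R^{d'}$, where $d'$ is the dimension of $\vx_t^{\mathrm{other}}$. We first record the regularity we need. Since $\vv_\theta$ is Lipschitz (the standing assumption that makes the PF-ODE in Eq.~(\ref{eq:pf_ode}) well posed), $\vphi^{\mathrm{\scriptscriptstyle Bi}}(\cdot,t)$ is a homeomorphism, and in particular continuous, so each $g_{\vx_t^i}$ is continuous. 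We read the hypothesis ``not a.e.\ constant with respect to $\vx_t^{\mathrm{other}}$'' as saying that for (almost) every fixed $\vx_t^i$, the map $g_{\vx_t^i}$ is not Lebesgue-a.e.\ equal to a single constant.

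\textbf{Pointwise non-injectivity.} Fix an arbitrary $\vx_t^{1:N}$, and write $\vc:=g_{\vx_t^i}(\vx_t^{\mathrm{other}})$. Suppose, for contradiction, that every $\vy_t^{1:N}$ with $\vy_t^i=\vx_t^i$ gives $\vphi^{\mathrm{\scriptscriptstyle Bi}}(\vy_t^{1:N},t)^i=\vc$. Then $g_{\vx_t^i}\equiv\vc$ on all of $\R^{d'}$, which makes it a.e.\ constant and contradicts the hypothesis. So some $\vz$ has $g_{\vx_t^i}(\vz)\neq\vc$, and $\vy_t^{1:N}:=(\vx_t^i,\vz)$ is the required witness. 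This step is close to immediate. The only subtlety is the quantifier: the hypothesis has to hold for the specific $\vx_t^i$ in question, which is why we take it to hold for every $\vx_t^i$ (or else prove the first claim only for a.e.\ $\vx_t^i$).

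\textbf{Positive probability of positive conditional variance.} We will use that the law of $\vx_t^{1:N}$ at time $t$ equals the noisy marginal $p_t=p_{\text{data}}*\gN(\vzero,t^2\mI)$ (after scaling). This law has a strictly positive density on all of $\R^{d}$ because of the Gaussian convolution, and that requires $t>0$. Hence the conditional law of $\vx_t^{\mathrm{other}}$ given $\vx_t^i$ also has a positive density, so it is mutually absolutely continuous with Lebesgue measure. The claim then follows by contraposition. If $\Var(g_{\vx_t^i}(\vx_t^{\mathrm{other}})\mid\vx_t^i)=0$ for almost every $\vx_t^i$, then for almost every $\vx_t^i$ the map $g_{\vx_t^i}$ is constant almost surely under the conditional law, and therefore constant Lebesgue-a.e. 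This contradicts the hypothesis on a set of $\vx_t^i$ of positive measure. So the event $\{\Var(\cdot\mid\vx_t^i)>0\}$ has positive probability. To use conditional variances we also need finite second moments, which come from $p_{\text{data}}$ having a finite second moment together with the linear-growth bound on the flow map inherited from the Lipschitz velocity.

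\textbf{Main obstacle.} The hard step is making the measure-theoretic bookkeeping rigorous. This means setting out precisely which regularity assumptions on $\vv_\theta$ and $p_{\text{data}}$ are needed (Lipschitz velocity, finite second moment), using a disintegration or Fubini argument to move from ``a.s.\ constant under the conditional law'' to ``Lebesgue-a.e.\ constant'', and confirming that the positivity of the density of $p_t$ survives conditioning. In the formal appendix version we would state the hypothesis directly as ``on a set of $\vx_t^i$ of positive $p_t$-measure, $g_{\vx_t^i}$ is not Lebesgue-a.e.\ constant'', after which the argument above goes through.
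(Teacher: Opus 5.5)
Your proof is correct. Your first part coincides with the paper's argument: if every $\vy_t^{1:N}$ sharing the $i$-th frame gave the same $i$-th output, the map $\vz\mapsto\vphi^{\mathrm{Bi}}((\vx_t^i,\vz),t)^i$ would be everywhere constant, contradicting the hypothesis. The paper's formal version, like you, imposes that hypothesis for each fixed $\vx_t^i$.

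The second part is where you take a different route. The paper builds on the witness $\vz_2$ from the first part. It appeals to positivity of the density and ``standard measure-theoretic arguments'' to say that a neighbourhood of $\vz_2$ contains uncountably many points whose outputs differ from the one at $\vz_1$, and declares this equivalent to $\mathbb{P}(\Var(\cdot\mid\vx_t^i,t)>0)>0$. Made precise, that localized argument needs a set of positive conditional mass, not merely uncountably many points. It therefore implicitly uses continuity of the flow map: disjoint neighbourhoods of $\vz_1$ and $\vz_2$ with separated images each carry positive mass under a positive density, so the output cannot be a.s.\ constant. That version would even work under the weaker hypothesis ``not everywhere constant''.

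Your argument does not use the first part. It uses the ``a.e.'' hypothesis directly: the conditional law of $\vx_t^{\mathrm{other}}$ given $\vx_t^i$ is mutually absolutely continuous with Lebesgue measure, so zero conditional variance is equivalent to Lebesgue-a.e.\ constancy. This needs no continuity, and your homeomorphism remark is never used. Under the for-each-$\vx_t^i$ hypothesis it in fact gives probability one.

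Two minor points. First, deriving positivity of $p_t$ from the Gaussian convolution presumes the model's marginal equals the forward-process marginal. That matches the paper's convention $\vphi(\vx_t,t)=\vx_0\sim p_{\text{data}}$, whereas the paper simply assumes $p_t>0$ a.e. Without exactness, positivity also follows by pushing $\gN(\vzero,\mI)$ through the bi-Lipschitz flow from time $1$ to $t$. Second, since $\vphi^{\mathrm{Bi}}(\vx_t^{1:N},t)^i=\vx_0^i$, finite conditional variances need only $\E\|\vx_0\|^2<\infty$ under $p_{\text{data}}$, so the linear-growth bound is unnecessary.
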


This implies that $\vphi^{\mathrm{\scriptscriptstyle Bi}}(\cdot,\cdot)^i$ is not a well-defined function. A key intuition for this issue is that a bidirectional diffusion model denoises the $i$-th frame using all frames. Thus, even with $\vx_t^i$ fixed, different $\vx_t^{>i}$ can yield different $\vx_0^i$. In Self Forcing, the autoregressive student is supervised without $\vx_t^{>i}$, causing information loss and thus violating Eq.~(\ref{eq:frame-wise injectivity}).

Similar to the issue identified in Eq.~(\ref{eq:collapse}), this frame-level non-injectivity prevents the autoregressive student model from recovering the teacher's true flow map:
\begin{proposition}[Distribution mismatch in current Self Forcing ODE distillation, proof in Appendix~\ref{sec:appendix_proof_asy_ode_is_wrong}]
\label{prop:asymmetric_ode}
    Using the notation of Lemma~\ref{lemma:non_injectivity_informal}, consider training a causal frame-wise model
    $G_\theta:(\vx_t^i,t)\mapsto\vx_0^i$ with the MSE regression target
    $\theta^* = \min_\theta \E_{\vx_t^{1:N},t}\!\left[\left\|G_\theta(\vx_t^i, t)-\vx_0^i\right\|^2\right]$ (we omit the conditional $\vx_t^{<i}$ for brevity), where $(\vx_t^{1:N},\vx_0^{1:N})$ is paired data from PF-ODE of a bidirectional diffusion model. Then the optimal solution does not follow the data distribution, i.e., 
    \begin{align}
    G_{\theta}^*(\vx_t^i,t) = \E[\vx_0^i \mid \vx_t^i,t]\nsim p_{\mathrm{data}}(\vx_0^i).
    \end{align}
\end{proposition}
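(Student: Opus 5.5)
The proof has two parts. The first identifies the MSE minimizer as the conditional expectation. The second shows that, because of Lemma~\ref{lemma:non_injectivity_informal}, this conditional expectation cannot reproduce $p_{\mathrm{data}}(\vx_0^i)$.

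\textbf{Step 1: the optimal regressor.} Treat $G_\theta$ as ranging over all measurable functions of $(\vx_t^i,t)$, i.e.\ assume sufficient capacity, and assume $\E\|\vx_0^i\|^2<\infty$. Condition on $(\vx_t^i,t)$ and use the orthogonal decomposition
\begin{align}
\E\|G(\vx_t^i,t)-\vx_0^i\|^2=\E\|G(\vx_t^i,t)-\E[\vx_0^i\mid\vx_t^i,t]\|^2+\E\,\mathrm{tr}\,\Var(\vx_0^i\mid\vx_t^i,t).
\end{align}
This shows that $G^*_\theta(\vx_t^i,t)=\E[\vx_0^i\mid\vx_t^i,t]$ almost everywhere. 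The pair $(\vx_t^{1:N},\vx_0^{1:N})$ is generated by the bidirectional PF-ODE, so $\vx_0^i=\vphi^{\mathrm{Bi}}(\vx_t^{1:N},t)^i$. Hence $G^*_\theta(\vx_t^i,t)=\E[\vphi^{\mathrm{Bi}}(\vx_t^{1:N},t)^i\mid \vx_t^i,t]$, where the expectation averages over $\vx_t^{\mathrm{other}}$.

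\textbf{Step 2: the pushforward distribution is wrong.} The exact flow map sends $p_t(\vx_t^{1:N})$ to $p_{\mathrm{data}}$. Its $i$-th marginal is the law of $\vphi^{\mathrm{Bi}}(\vx_t^{1:N},t)^i$, which equals $p_{\mathrm{data}}(\vx_0^i)$. Write $Z:=\vphi^{\mathrm{Bi}}(\vx_t^{1:N},t)^i$ and $\hat Z:=\E[Z\mid\vx_t^i]$. By the law of total variance,
\begin{align}
\mathrm{Cov}(Z)=\mathrm{Cov}(\hat Z)+\E[\mathrm{Cov}(Z\mid\vx_t^i)].
\end{align}
The ``Moreover'' part of Lemma~\ref{lemma:non_injectivity_informal} says that $\Var(Z\mid\vx_t^i,t)>0$ on a set of positive probability. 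Therefore $\E[\mathrm{tr}\,\mathrm{Cov}(Z\mid\vx_t^i)]>0$, which gives $\mathrm{tr}\,\mathrm{Cov}(\hat Z)<\mathrm{tr}\,\mathrm{Cov}(Z)$. The two laws have the same mean but different total variance, so the law of $G^*_\theta(\vx_t^i,t)$ differs from $p_{\mathrm{data}}(\vx_0^i)$. This is a concrete version of the intuitive statement that averaging shrinks the distribution and blurs samples. The conclusion holds for every $t\in(0,1]$, and with conditioning on $\vx_t^{<i}$ the same argument goes through with all conditional expectations and variances also conditioned on $\vx_t^{<i}$.

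\textbf{Main obstacle.} The delicate point is translating the lemma's hypothesis into a strict variance gap. The lemma is informal, and ``not a.e.\ constant in $\vx_t^{\mathrm{other}}$'' has to be turned into positive conditional variance on a positive-measure set of $\vx_t^i$. I would rely on the rigorous appendix version of Lemma~\ref{lemma:non_injectivity_informal}, which supplies exactly this positivity. I also need finite second moments so that the covariance comparison is meaningful. If a distribution-level statement is preferred to a moment-level one, I would fall back on the fact that the conditional mean is a deterministic contraction in the convex order. A random variable cannot equal its own conditional expectation in law unless that conditional variance vanishes almost surely. This follows from strict Jensen's inequality applied to $\|\cdot\|^2$.
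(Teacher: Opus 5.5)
Your proposal is correct and follows essentially the same route as the paper. You identify the MSE minimizer as $\E[\vx_0^i\mid\vx_t^i,t]$, and then use the strict positivity of the expected conditional variance, supplied by the appendix version of Lemma~\ref{lemma:non_injectivity_informal}, to show that this conditional mean has strictly smaller spread than $\vx_0^i\sim p_{\mathrm{data}}(\vx_0^i)$, so the two laws differ. The only cosmetic difference is that you compare covariance traces via the law of total variance, while the paper compares raw second moments $\E\|\vx_0^u\|^2>\E\|\E[\vx_0^u\mid\vx_t^u,t]\|^2$ via the $L^2$ projection identity; the two are equivalent because the tower property gives equal means.
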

As shown in Fig.~\ref{fig:injectivity_of_ode_distill}c, this leads to blurry videos and is markedly inferior to standard ODE distillation with a bidirectional student in Fig.~\ref{fig:injectivity_of_ode_distill}a.

\begin{figure}[H]
    \centering
    \includegraphics[width=.45\textwidth]{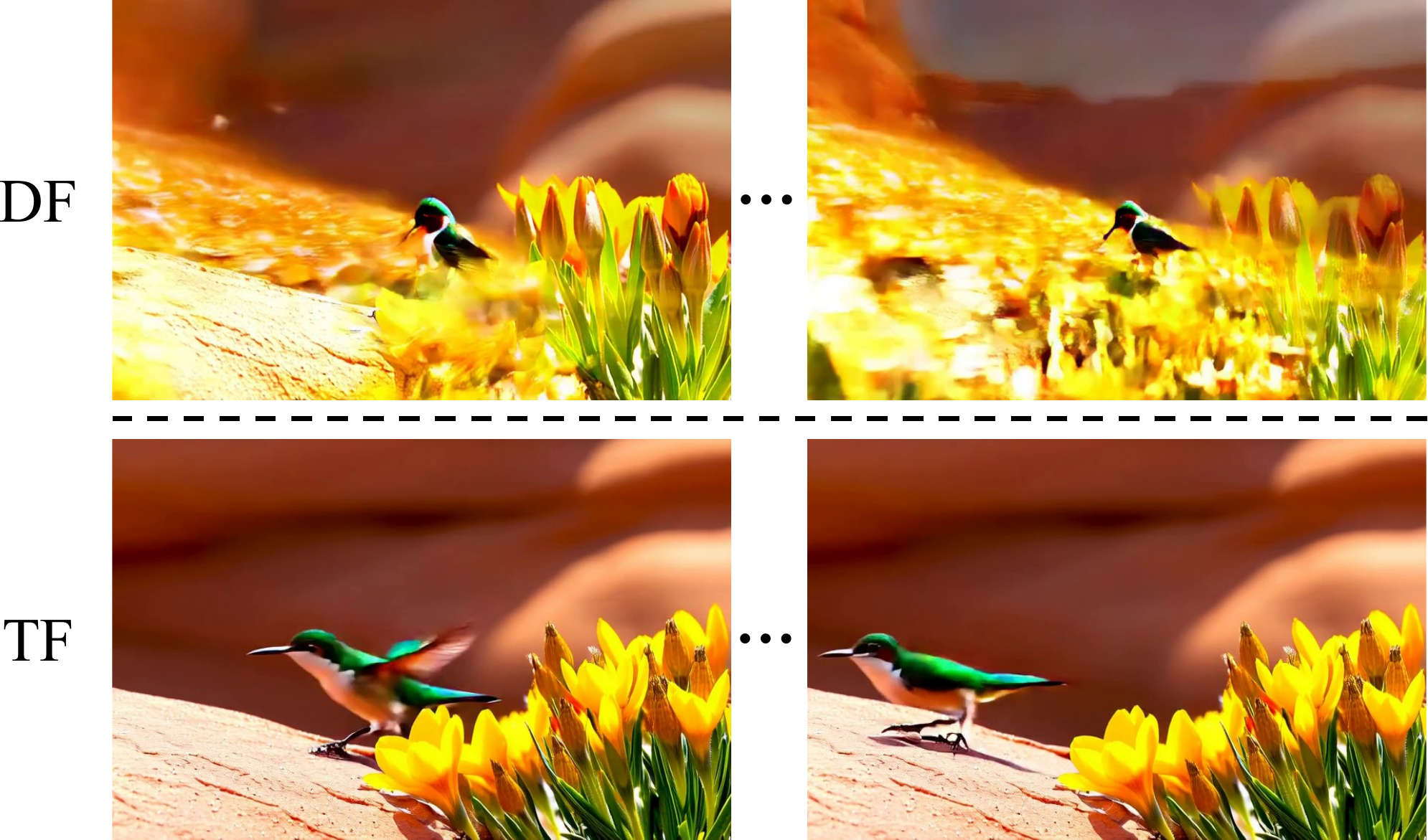}

\caption{\textbf{TF vs.\ DF in AR diffusion training.} Contrary to common belief, DF leads to video collapse due to the training-inference gap, whereas TF produces higher visual quality.}

\label{fig:DF_vs_TF}
\end{figure}

\subsection{Causal Forcing}
\label{sec:method_ours}
Building on the above analysis, bridging the architectural gap requires ODE distillation to satisfy the frame-level injectivity condition in Eq.~(\ref{eq:frame-wise injectivity}), which in turn requires an autoregressive diffusion model as the teacher. We therefore propose \textit{Causal Forcing}, a three-stage method that sequentially consists of teacher forcing autoregressive diffusion training, causal ODE distillation, and asymmetric DMD.

\paragraph{Stage 1: autoregressive diffusion training.} We begin by revisiting two standard training paradigms for AR diffusion models, namely teacher forcing (TF) and diffusion forcing (DF) (see Sec.~\ref{sec:ar_diffusion}), to obtain the AR diffusion model that serves as the teacher for subsequent ODE distillation. Somewhat surprisingly, and contrary to common belief, we find that \emph{teacher forcing is more suitable than diffusion forcing for training AR diffusion models}, both theoretically and empirically. Specifically, when training the $i$-th frame $\vx_t^i$, diffusion forcing is conditioned on heavily noised preceding frames $\vx_t^{<i}$, whereas inference is conditioned on clean preceding frames $\vx_0^{<i}$. This discrepancy introduces a substantial training–inference distribution mismatch. We formalize this issue in the following proposition.

\begin{proposition}[Distribution mismatch in autoregressive diffusion forcing]
\label{prop:df_is_wrong}
    Under the notation of Sec.~\ref{sec:ar_diffusion} and regularity conditions in Appendix \ref{sec:appendix_proof_df_is_wrong}:
    \begin{align}
        \E_{\vy\sim p_{\text{data}}(\vx_0^{<i})}\Big[\KL\big(p_{\text{DF}}(\vx_0^i \mid \vy)\,||\,p_{\text{data}}(\vx_0^i \mid \vy)\big)\Big] > 0.\notag
    \end{align}
    That is, the model trained with autoregressive diffusion forcing does not follow the data distribution conditioned on the causal prefix $\vy$. See Appendix~\ref{sec:appendix_proof_df_is_wrong} for the proof.
\end{proposition}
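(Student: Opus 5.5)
The plan is to write $p_{\text{DF}}(\vx_0^i\mid\vy)$ explicitly as a mixture of true conditionals, and then show that a nontrivial mixture cannot equal the true conditional for almost every $\vy$.

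\textbf{Step 1: identify the DF target.} Diffusion forcing trains the $i$-th frame conditioned on a noisy prefix $\vx_t^{<i}\sim q_{t|0}(\cdot\mid\vx_0^{<i})$. At the optimum, the model therefore matches the data conditional given the noisy prefix:
\begin{align}
p_{\text{DF}}(\vx_0^i\mid\vz)=\int p_{\text{data}}(\vx_0^i\mid \vx_0^{<i})\,p(\vx_0^{<i}\mid \vz)\,\mathrm{d}\vx_0^{<i},\notag
\end{align}
where $p(\vx_0^{<i}\mid\vz)\propto q_{t|0}(\vz\mid\vx_0^{<i})\,p_{\text{data}}(\vx_0^{<i})$ is the Gaussian-blurred posterior with $t>0$. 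At inference, the clean prefix $\vy$ is plugged in as $\vz=\vy$, so the quantity to compare is $p_{\text{DF}}(\cdot\mid\vy)=\int p_{\text{data}}(\cdot\mid\vx')\,p(\vx'\mid\vy)\,\mathrm{d}\vx'$. This is the posterior-weighted average of true conditionals, taken over all prefixes $\vx'$ compatible with a noisy observation equal to $\vy$. For brevity I would fix a single noise level $t$; a distribution over $t$ only adds an outer mixture and changes nothing.

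\textbf{Step 2: reduce to a non-equality.} The KL divergence is nonnegative, so the expectation is $0$ if and only if $p_{\text{DF}}(\cdot\mid\vy)=p_{\text{data}}(\cdot\mid\vy)$ for $p_{\text{data}}$-a.e.\ $\vy$. I would argue by contradiction and assume this equality holds a.e. The needed regularity conditions are that $p_{\text{data}}(\vx_0^i\mid\vx_0^{<i})$ depends nontrivially and continuously on the prefix, and that the prefix density has full support or at least open support. Under these conditions, $\vy\mapsto p_{\text{data}}(\cdot\mid\vy)$ is a fixed point of the operator $K$, where $(Kf)(\vy)=\int f(\vx')\,p(\vx'\mid\vy)\,\mathrm{d}\vx'$. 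This operator is a Markov smoothing kernel with strictly positive Gaussian weights.

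\textbf{Step 3: the key step and main obstacle.} I must show that such a fixed point has to be constant in $\vy$, which contradicts nontrivial dependence. My first approach is a maximum-principle argument. Fix a bounded test function $h$ of $\vx_0^i$ and set $g(\vy)=\E_{p_{\text{data}}}[h(\vx_0^i)\mid\vy]$, so that $g=Kg$. If $g$ attains its supremum at an interior point $\vy^*$, then positivity of the Gaussian kernel on the support forces $g\equiv\sup g$ on the support. This contradicts non-constancy.

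If the supremum is not attained, I would instead use a variance argument. Let $\vy\sim p_{\text{data}}$ and $\vx'\sim p(\cdot\mid\vy)$. Because $K$ is a conditional expectation, $\E[g(\vy)^2]=\E[(Kg)(\vy)^2]\le\E[g(\vx')^2]$. For this to be a strict contradiction, I need the marginal of $\vx'$ to coincide with $p_{\text{data}}$, and that matching of marginals is the delicate point. To handle it, I would assume an explicit regularity condition, for example a compact, convex support on which $g$ is continuous. A cleaner alternative is to construct a simple two-point or Gaussian example in which the inequality is visibly strict, and then state the condition as ``the conditional is not invariant under Gaussian posterior averaging.''

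Either route contradicts the a.e.\ equality, so the expected KL is strictly positive.
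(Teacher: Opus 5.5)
Your Steps 1--2 are the paper's argument. The mixture formula is its Markov-chain/tower-property identity $p_{\mathrm{data}}(\vx_0^i\mid Z=\vy)=\E[p_{\mathrm{data}}(\vx_0^i\mid Y)\mid Z=\vy]$, your $K$ is its operator $T$, and the contradiction is with non-independence (A2). The paper settles your Step 3 in one sentence: it asserts that positivity and non-degeneracy of the posterior kernel (A3) force every bounded fixed point of $T$ to be a.e.\ constant. You are right that this is the whole difficulty. It is a genuine gap in both arguments, because with unbounded support the claim is false.

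Take one scalar prefix frame $Y\sim\gN(0,1)$ and the paper's schedule $\alpha_t=1-t$, $\sigma_t=t$ at a fixed $t\in(0,\tfrac12)$. Then $Y\mid Z=z\sim\gN(cz,s^2)$ with $c=\alpha_t/(\alpha_t^2+\sigma_t^2)>1$ and $s^2=\sigma_t^2/(\alpha_t^2+\sigma_t^2)$. Set $k:=\sqrt{c^2-1}/s$, let $\Phi$ be the standard normal CDF and $\xi\sim\gN(0,1)$. The bounded, non-constant $g(y):=\Phi(ky)$ satisfies $(Tg)(y)=\E[\Phi(kcy+ks\xi)]=\Phi\big(kcy/\sqrt{1+k^2s^2}\big)=\Phi(ky)$. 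Now let $X\mid Y=y$ be $\gN(1,1)$ with probability $\Phi(ky)$ and $\gN(-1,1)$ otherwise. Then $p_{\mathrm{data}}(x\mid Z=y)=p_{\mathrm{data}}(x\mid Y=y)$ for every $y$, so the expected KL is $0$. This holds even though (A1)--(A3) hold and the conditional depends continuously and nontrivially on a fully supported prefix. The chain $y\mapsto cy+s\xi$ is transient, and $g$ is its probability of escaping to $+\infty$. This also explains why your variance inequality gives no contradiction. The marginal of $\vx'$ is $\int p(\cdot\mid Z=\vy)\,p_{\mathrm{data}}(\vy)\,\mathrm{d}\vy$, which equals $p_{\mathrm{data}}(Y)$ when the noisy prefix has the same law as the clean one, but not in general; here $\Var(cY+s\xi)=c^2+s^2>1$.

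So Step 3 needs a hypothesis strictly stronger than positivity of the kernel, and of your fallbacks only the maximum principle works. Commit to it under two assumptions: $p_{\mathrm{data}}(Y)$ has compact support $S$, and $\vy\mapsto p_{\mathrm{data}}(\cdot\mid\vy)$ is continuous on $S$ (say in total variation). The argument then runs as follows.
\begin{enumerate}
\item For bounded $h$, both $g(\vy)=\E[h(X)\mid Y=\vy]$ and $Kg$ are continuous on $S$, by dominated convergence in the Gaussian likelihood.
\item Full-measure sets are dense in the support, so the a.e.\ identity $g=Kg$ holds on all of $S$.
\item $g$ attains its maximum $M$ at some $\vy^*\in S$; interiority is not needed.
\item $p(\cdot\mid Z=\vy^*)$ is mutually absolutely continuous with $p_{\mathrm{data}}(Y)$, so $\int(M-g)\,\mathrm{d}p(\cdot\mid Z=\vy^*)=0$ forces $g\equiv M$ on $S$.
\end{enumerate}
Since this holds for every bounded $h$, $X$ and $Y$ are independent, which is the contradiction you want.

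Discard the other options:
\begin{itemize}
\item ``Full support'' in your Step 2 is exactly the regime of the counterexample.
\item A particular two-point or Gaussian example where the inequality is strict cannot prove a statement about all data distributions, and the Gaussian one can fail, as shown above.
\item Assuming the conditional is ``not invariant under Gaussian posterior averaging'' is equivalent to the conclusion.
\end{itemize}
With the compact-support hypothesis stated explicitly, your argument is complete and more rigorous than the paper's.
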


In contrast, teacher forcing conditions on clean preceding frames $\vx_0^{<i}$ during training, thereby aligning the training objective with the inference process and eliminating this gap. The empirical comparisons in Fig.~\ref{fig:DF_vs_TF} further corroborate this theoretical analysis. For further discussion of diffusion forcing and recent alternatives~\cite{wu2025pack, po2025bagger, guo2025end}, see Appendix~\ref{sec:appendix_df_tf_and_others}. Accordingly, we adopt an autoregressive diffusion model trained via teacher forcing. \textbf{\textit{Notably, this multi-step AR diffusion model already provides a substantially improved initialization for asymmetric DMD, but still exhibits abrupt artifacts. Please refer to Appendix~\ref{sec:appendix_diff_as_dmd_init} for both qualitative and quantitative results. }}

\paragraph{Stage 2: causal ODE distillation.} With the above AR diffusion model as the teacher, we next perform causal ODE distillation. Firstly, we need to sample and store the PF-ODE trajectory $\{\vx_t^i\}_{t\in\gS\cup\{0\}}$ from the AR diffusion teacher at the required timesteps $\gS$. To achieve this, we sample the clean frames $\vx_{\mathrm{gt}}^{<i}$ from the real dataset and use the teacher to generate the current frame with ODE solver conditioned on these frames as history, starting from Gaussian noise $\vx_T^i \sim \mathcal{N}(\vzero,\mI)$. Then, the student $G_\theta$ is trained to regress the clean target $\vx_0^i$ from the intermediate noisy state $\vx_t^i$, conditioned on the same history clean frames $\vx_{\mathrm{gt}}^{<i}$:
\begin{align}
    \theta^* = \min_\theta \E_{\vx_{\text{gt}}^{<i},\, t\in\gS, i, \vx_t^i}[||G_\theta(\vx_t^i,\vx_{\text{gt}}^{<i}, t)-\vx_0^i||^2],
\end{align}
Notably, since the teacher here is autoregressive rather than bidirectional, its PF-ODE ensures the frame-level injectivity condition in Eq.~(\ref{eq:frame-wise injectivity}) by nature. Therefore, our method avoids the collapse identified in Proposition~\ref{prop:asymmetric_ode} and enables the student to learn the flow map accurately. As shown in Fig.~\ref{fig:injectivity_of_ode_distill}b, sampling such an AR teacher for ODE distillation indeed yields strong performance.
 
\paragraph{Stage 3: asymmetric DMD.} Building upon the above causal ODE initialization for the AR student, we perform asymmetric DMD following Self Forcing. Exactly as in Self Forcing’s DMD, this process lets the student model condition on its own generated prefix and perform self-rollout, thereby reducing the training–inference gap. As shown in Fig.~\ref{fig:asymmetric_ode_vs_casual_ode}, DMD with our causal ODE initialization yields a final model that substantially outperforms Self Forcing, indicating that the architectural gap is effectively resolved.

\subsection{Extension to Causal Consistency Models}
Beyond the score-distillation paradigm mentioned above, since ODE distillation can be viewed as a simplified form of consistency distillation (CD), our perspective also naturally extends to CD. In this section, we present the first causal CD framework and further show that it outperforms the asymmetric CD that uses a bidirectional model as a teacher.

Specifically, we use the aforementioned native autoregressive diffusion model as the teacher, and train the causal consistency model $G_\theta$ via teacher forcing as follows:
\begin{align}
\theta^*=\min_\theta\E_{\vx_\text{gt},\vepsilon,t,i}[&w(t) d(G_\theta(\vx_t^i,\vx_\text{gt}^{<i}, t),\notag\\&G_{\theta^-}(\hat{\vx}^i_{t-\Delta t},\vx_\text{gt}^{<i}, t-\Delta t))], 
\end{align}
where $\hat{\vx}^i_{t-\Delta t}$ is obtained by solving ODE from $\vx_t^i$ using the autoregressive teacher model conditioned on clean prefix $\vx_\text{gt}^{<i}$, and other notations follow Sec.~\ref{sec:cd}. Consistent with Sec.~\ref{sec:method_ours}, this approach leverages the frame-level injectivity of the native AR teacher, enabling the student to learn the correct flow map. In contrast, asymmetric CD teacher violates this injectivity as stated in Lemma~\ref{lemma:non_injectivity_informal}, leading to collapse. As shown in Fig.~\ref{fig:appendix_cd} in Appendix~\ref{sec:appendix_more_implementation}, our causal CD substantially outperforms asymmetric CD.

Such causal CD may also serve as a substitute for causal ODE distillation, providing a strong initialization for the DMD stage. We leave this to future work.
\begin{figure}[H]
    \centering
    \includegraphics[width=.45\textwidth]{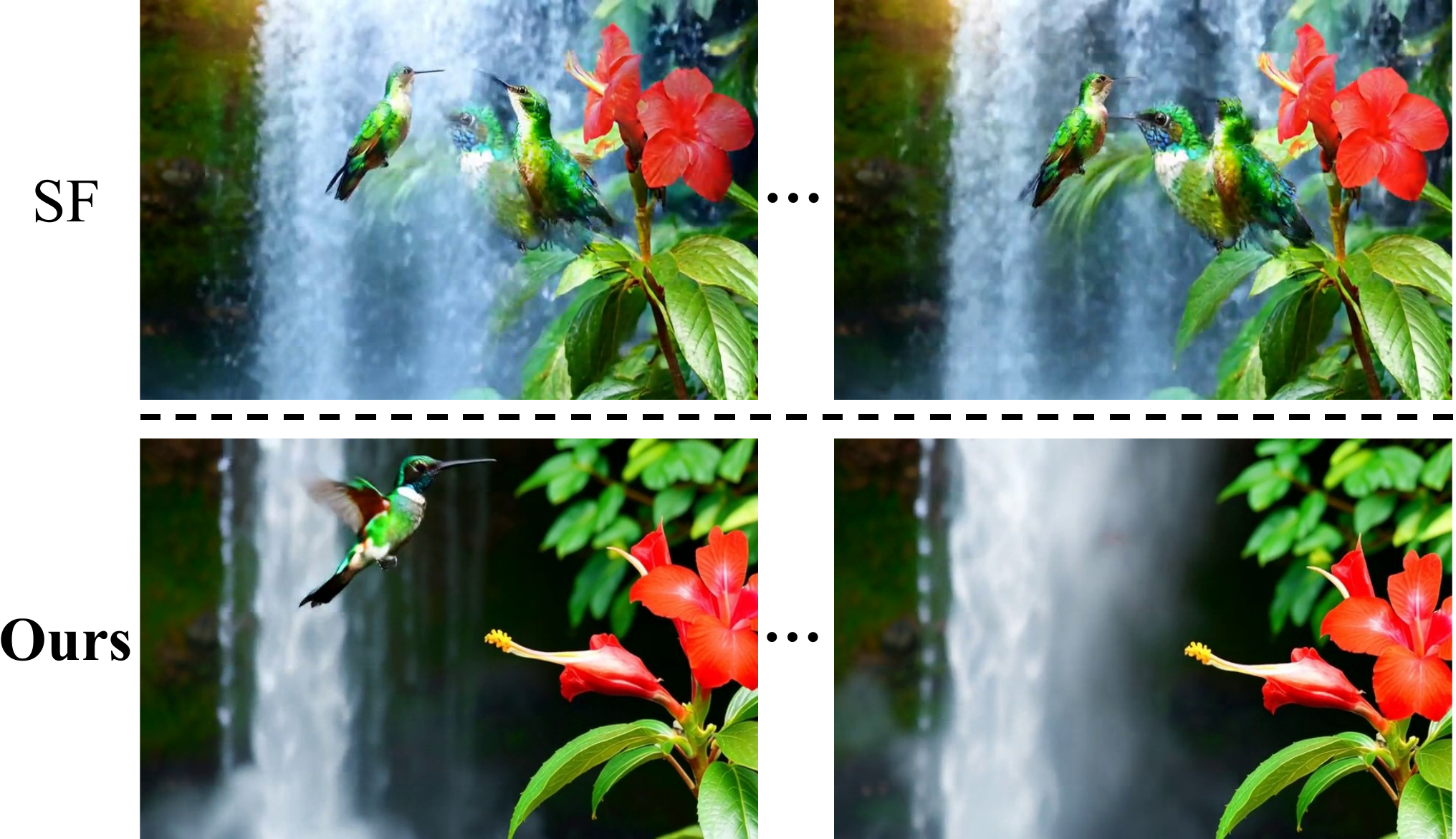}
\caption{\textbf{Performance comparison between Self Forcing (SF) and ours.} DMD with Self Forcing's ODE initialization shows weaker dynamics and artifacts, whereas with causal ODE initialization, it achieves stronger dynamics with higher visual fidelity.}
\label{fig:asymmetric_ode_vs_casual_ode}
\end{figure}

\begin{figure*}[h]
    \centering
    \includegraphics[width=\textwidth]{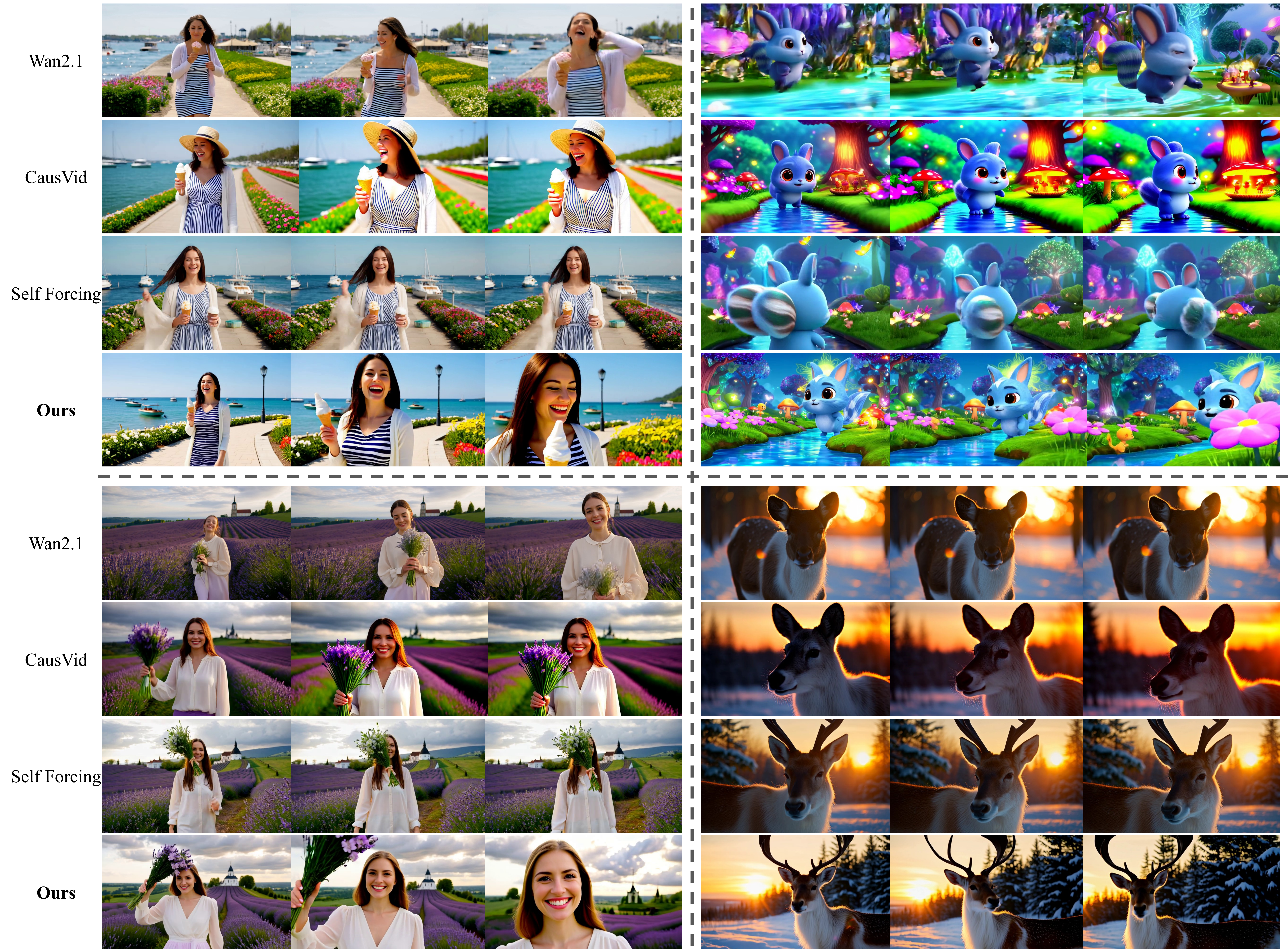}
    \caption{\textbf{Qualitative comparisons with existing methods.} Our method achieves substantially higher dynamics and better visual quality than existing distilled autoregressive video models (Causvid and Self Forcing), while matching or even surpassing bidirectional diffusion models (Wan2.1). \emph{More video demos and all the prompts used in this paper are provided in the supplementary materials.}}
    \label{fig:performance_comparison}
    \vspace{-.5cm}
\end{figure*}
\begin{table*}[h]
\small\setlength{\tabcolsep}{1.75pt} 
  \caption{\textbf{Quantitative comparisons with existing methods.} Our method consistently outperforms all baselines across all metrics. Dynamic., Vision., Instruct., and Rating denote Dynamic Degree, VisionReward, Instruction Following, and user rating, respectively.}
  \label{tab:performance_comparison}
  \centering
  \begin{tabular}{lcccccccccc}
      \toprule
      Model & Throughput $\uparrow$& Latency $\downarrow$&Total$\uparrow$ & Quality$\uparrow$ & Semantic$\uparrow$ 
      & Dynamic.$\uparrow$
      & Vision.$\uparrow$
      & Instruct.$\uparrow$
      & Rating$\downarrow$
       \\
    \midrule
    \rowcolor{catgray}
    \multicolumn{10}{l}{\textit{Bidirectional Video Diffusion Models}}\\
    LTX-1.9B~\cite{hacohen2024ltx}    & 8.98 & 13.5 & 79.83 & 81.88 &  71.62 & 46  & -- 6.218 & -- 38 & 6.40 \\
    Wan2.1-1.3B~\cite{wan2025wan}      &   0.78  & 103 & 83.37 & 84.30 & 79.65  & 61  & 5.275 & 42 & 2.29 \\

    \midrule
    \rowcolor{catgray}
    \multicolumn{10}{l}{\textit{Autoregressive Video Diffusion Models}}\\
    NOVA~\cite{deng2024autoregressive}          & 0.88 &  4.1  & 80.31 & 80.66 & 78.92  & 46 & -- 7.381& -- 16 & 8.41 \\
    Pyramid Flow~\cite{jin2024pyramidal} & 6.70 &  2.5 & 80.75 & 83.41          & 70.11 & 16 & 4.055 & --  2 & 6.11\\
    
    SkyReels-V2-1.3B~\cite{chen2025skyreels} & 0.49 & 112  & 81.97 & 83.96 & 74.01         & 37  & 3.584 & 32 & 6.57 \\
    MAGI-1-4.5B~\cite{teng2025magi}           &  0.19 & 282   & 78.88 & 81.67 & 67.72  & 42  & 0.773 & 8 & 6.44 \\ 
    \midrule
    \rowcolor{catgray}
    \multicolumn{10}{l}{\textit{Distilled Autoregressive Video Models}}\\
    CausVid~\cite{yin2025slow}& \textbf{17.0} & \textbf{ 0.69 } & 81.33 & 83.98 & 70.72 &  62  & 5.741 & 12 & 4.27 \\
    Self Forcing~\cite{huang2025self} & \textbf{17.0} & \textbf{0.69 }& 83.74 & 84.48 & 80.77 & 57 & 5.820 & 48 & 2.87 \\
   Causal Forcing \textbf{(Ours)}  & \textbf{17.0} & \textbf{0.69 }& \textbf{84.04} & \textbf{84.59}& \textbf{81.84} & \textbf{68} &\textbf{ 6.326} & \textbf{56} & \textbf{1.64}\\
    \bottomrule
  \end{tabular}\\
\end{table*}

\section{Experiments}
\label{sec:experiment}

\subsection{Setup}

\paragraph{Implementation details.} 
Following Self Forcing~\cite{huang2025self}, we adopt Wan2.1-T2V-1.3B~\cite{wan2025wan} as our base model to fine-tune from, which generates 81-frame videos at a resolution of $832 \times 480$. We first train an autoregressive diffusion model with teacher forcing for 2K steps on a 3K dataset $\gD_{\text{Bi}}$ synthesized by the base bidirectional model. When constructing $\gD_{\text{Bi}}$, we also store noisy intermediates for the baseline ODE distillation for ablation. We then use the autoregressive diffusion model as the teacher to sample 3K causal ODE-trajectories $\gD_{\text{Causal}}$ and perform causal ODE distillation for 1K steps. The resulting model initializes asymmetric DMD, trained on VidProM~\cite{wang2024vidprom} for 750 steps until convergence under the same protocol as Self Forcing. Following Self Forcing, we implement all methods in a chunk-wise manner, where each chunk contains 3 latent frames. For causal CD in the following ablation section, we adopt the LCM~\cite{luo2023latent} setting. See more details in Appendix~\ref{sec:appendix_more_implementation}.

\paragraph{Evaluation.} 
We adopt VBench~\cite{huang2024vbench} as our primary evaluation benchmark following Self Forcing. For overall visual assessment, we employ VisionReward~\cite{xu2024visionreward}, which correlates well with human judgments, and we additionally report VisionReward’s Instruction Following sub-score to measure instruction adherence. Notably, VisionReward scores can be negative, but higher values are always better. Since many VBench prompts involve minimal motion, we further curate a 100-prompt set with rich motion and complex actions, provided in the supplementary materials. We evaluate VisionReward, Instruction Following, and Dynamic Degree on this 100-prompt set. For readability, all these metrics are scaled by 100. Additionally, we conduct a user study with 10 participants on 10 prompts, where users rank the overall video quality across all methods. Finally, to assess real-time capability, we report throughput and latency on a single H100 GPU by FPS and seconds, respectively. See more details in Appendix~\ref{sec:appendix_more_implementation}.

\subsection{Results}
\paragraph{Performance comparison with existing models.}
We compare our model against baselines of comparable scale, including bidirectional video diffusion models Wan2.1-1.3B~\cite{wan2025wan}, LTX-1.9B~\cite{hacohen2024ltx}, autoregressive video diffusion models NOVA~\cite{deng2024autoregressive}, Pyramid Flow~\cite{jin2024pyramidal}, SkyReels-V2-1.3B~\cite{chen2025skyreels}, MAGI-1-4.5B~\cite{teng2025magi}, and distilled autoregressive video models Causvid~\cite{yin2025slow} and Self Forcing~\cite{huang2025self}.

As shown in Tab. \ref{tab:performance_comparison}, our method consistently outperforms all baselines across all metrics, achieving the best dynamic degree, visual quality, and instruction following ability. Compared to bidirectional diffusion models with a similar parameter scale, our method matches the performance of the SOTA Wan2.1 and even surpasses it, while delivering a 2079\% higher throughput and substantially faster inference. In comparison with existing autoregressive diffusion models, our method improves over the best baseline by 47.8\% in dynamic degree, 56.0\% in VisionReward, and 75.0\% in instruction following. Compared to distilled autoregressive video models, we maintain the same exceptionally high throughput and outperform the current SOTA Self Forcing by 19.3\% in dynamic degree, 8.7\% in VisionReward, and 16.7\% in instruction following. Qualitative results in Fig.~\ref{fig:performance_comparison} align with the quantitative findings, showing that our method significantly surpasses the SOTA distilled autoregressive models. Notably, the baseline distilled autoregressive models both perform at least 3K steps of ODE initialization before DMD, the same as our method. Thus, our method uses exactly the same training budget, yet delivers substantial improvements.

\paragraph{Ablation studies.} We compare different strategies under autoregressive diffusion training, score distillation, and consistency distillation (CD). For autoregressive diffusion training, Self Forcing's ODE initialization, and CD, we use the $\mathcal{D}_{\text{Bi}}$ dataset; for our causal ODE initialization, we use $\mathcal{D}_{\text{Causal}}$. Since both datasets are internal synthetic data using the same prompts, we ensure that the data quality is identical, thus guaranteeing the fairness of the comparison. We report all results under the chunk-wise setting. In addition, for ODE initialization and DMD, we also report results under the frame-wise setting. See more details in Appendix~\ref{sec:appendix_more_implementation}.

Tab.~\ref{tab:ablation} shows that during autoregressive diffusion training, teacher forcing outperforms diffusion forcing across all metrics, with VisionReward improving by $111.2\%$, consistent with Fig.~\ref{fig:DF_vs_TF}. Diffusion forcing attains a higher dynamic degree, but it largely stems from the collapse that pathologically inflates the motion metric. For ODE initialized DMD, our causal ODE initialization substantially outperforms Self Forcing's ODE initialization. Under the chunk-wise setting, DMD with our causal ODE initialization improves VisionReward by 90.0\%, dynamic degree by 183.3\%, and instruction following by 47.4\%.  This improvement is even more pronounced under the frame-wise setting, with a 3100\% improvement in dynamic degree and a 218.0\% increase in VisionReward. This is consistent with the qualitative results in Fig.~\ref{fig:asymmetric_ode_vs_casual_ode}, demonstrating that causal ODE distillation provides the correct initialization for DMD. We also compare our causal CD with the asymmetric CD, where causal CD improves VisionReward by 9.781 and instruction following by 60. Qualitative visualizations are provided in Appendix~\ref{sec:appendix_more_implementation} Fig.~\ref{fig:appendix_cd}. Notably, our current CD is still a rudimentary instantiation that directly adopts vanilla LCM~\cite{luo2023latent}, and therefore underperforms score distillation. Nevertheless, our formulation paves the way for future work~\cite{lu2024simplifying,zheng2025large}.

\begin{table}[h]
\small\setlength{\tabcolsep}{0.5pt}
  \caption{\textbf{Ablation study.} Tot., Qua., Sem., Dy., Vis., and Inst. denote Total, Quality, Semantic VBench score, Dynamic Degree, VisionReward, and Instruction Following, respectively.}
  \centering
  \begin{tabular}{lccccccc}
      \toprule
      Method & Tot.$\uparrow$ & Qua.$\uparrow$ & Sem.$\uparrow$ 
      & Dy.$\uparrow$
      & Vis.$\uparrow$
      & Inst.$\uparrow$
       \\
       \midrule
    \rowcolor{catgray}
    \multicolumn{7}{l}{\textit{Autoregressive Diffusion Training}}\\
     Diffusion Forcing  & 81.76 & 82.52 & 78.71 &\textbf{ 60} & 1.583 & 30 \\
     Teacher Forcing  & \textbf{82.12 }& \textbf{82.73} & \textbf{79.67} & 50 & \textbf{3.343} & \textbf{32}\\
       \midrule
    \rowcolor{catgray}
    \multicolumn{7}{l}{\textit{Score Distillation (Chunk-wise)}}\\
    Self Forcing's ODE + DMD & 82.00 & 82.18 & 81.29 & 24 & 3.330 & 38\\
    Causal ODE + DMD  & \textbf{84.04} & \textbf{84.59}& \textbf{81.84} & \textbf{68} & \textbf{6.326} & \textbf{56}\\
    \midrule
    \rowcolor{catgray}
    \multicolumn{7}{l}{\textit{Score Distillation (Frame-wise)}}\\
    Self Forcing's ODE + DMD & 81.83 & 82.66 & 78.50 & 2 & 1.951 & -- 4\\
    Causal ODE + DMD  & \textbf{83.75} & \textbf{84.35}& \textbf{81.37 }& \textbf{64} & \textbf{6.204} & \textbf{42}\\
    \midrule
    \rowcolor{catgray}
    \multicolumn{7}{l}{\textit{Consistency Distillation}}\\
     Asymmetric CD  & 79.07 & 79.99 & 75.37 & \textbf{59 }& -- 7.983& -- 42 &\\
     Causal CD  & \textbf{81.48} & \textbf{82.13} & \textbf{78.88}& 51 &\textbf{ 1.798 }& \textbf{18} \\
    \bottomrule
  \end{tabular}\\
  \label{tab:ablation}
\end{table}

\section{Further Discussion}

In this section, we further discuss the extended applications of Causal Forcing and its distinctions from prior work.

\paragraph{Long video generation.} A common view is that AR modeling naturally fits long video generation due to its streaming capability. However, this is non-trivial: like Self Forcing~\cite{huang2025self}, Causal Forcing is trained on 5s videos with at most 5s attention, so directly extrapolating it longer causes a training-inference gap, and leads to degradation. This issue can be addressed by long-video adaptation methods orthogonal to Causal Forcing, including training-based LongLive~\cite{yang2025longlive} and Rolling Forcing~\cite{liu2025rolling}, and training-free Infinity-RoPE~\cite{yesiltepe2026infinity} and Deep Forcing~\cite{yi2025deep}.

\paragraph{Distinction from other AR distillation paradigms.} 
Beyond DMD-based AR distillation methods such as CausVid and Self Forcing, another line of work adopts GAN-based AR distillation~\cite{goodfellow2014generative}, e.g., APT2~\cite{lin2026autoregressive}. Unlike CausVid and Self Forcing, APT2 uses teacher-forcing CD for initialization, whose theoretical objective is aligned with the causal ODE initialization in Causal Forcing. Here we emphasize the fundamental distinctions and independent contributions of Causal Forcing. 

First, Causal Forcing is the first work to theoretically show that forward-KL-based distillation, including ODE/CD, should use an AR teacher to train an AR student, whereas APT2 provides no theoretical analysis. Second, Causal Forcing significantly differs from APT2 in both algorithm and architecture: it follows the asymmetric DMD paradigm of CausVid and Self Forcing, using a bidirectional diffusion model as the DMD teacher, and supports temporal KV cache inference. In contrast, APT2 is GAN-based, has no bidirectional-teacher formulation, and uses no temporal KV cache, instead concatenating noise with conditions. Finally, built upon these distinct post-training algorithms and model architectures, Causal Forcing open-sources the first few-step AR model initialized by causal ODE. Since APT2 is not open-source, we do not compare with it in this paper.

\section{Conclusion}
In this paper, we identify the limitations of existing methods for autoregressive video diffusion distillation and clarify that bridging the architectural gap is essential. Focusing on the ODE initialization, we show that a fundamental requirement is frame-level injectivity that existing methods violate. Building on this theoretical analysis, we propose \emph{Causal Forcing}: we first train an autoregressive diffusion model via teacher forcing, and then use it as the teacher for ODE distillation to initialize the subsequent DMD stage. Experiments show that our method consistently outperforms all baselines across all metrics, demonstrating its effectiveness.

\section*{Acknowledgements}

This work was supported by NSFC Projects (Nos. 62595773, 62522609, 62550004, 92470118, 92370124, U25B6003, 62350080), the Beijing Natural Science Foundation (No. L247030); partially supported by the Shandong Provincial Natural Science Foundation (No. ZR2022ZD01) and Beijing Natural Science Foundation L247011; supported by Tsinghua Institute for Guo Qiang, and the High Performance Computing Center, Tsinghua University. J. Zhu was also supported by the XPlorer Prize.

\section*{Impact Statement}
This paper presents work whose goal is to advance the field of Machine
Learning. There are many potential societal consequences of our work, none of
which we feel must be specifically highlighted here.

\bibliography{example_paper}
\bibliographystyle{icml2026}

\newpage
\appendix
\onecolumn

\section{Extended Related Work}
\label{sec:appendix_related_work}

\paragraph{Video Generative Models.}
Building on the tremendous success of diffusion models, many works have applied them to video generation~\cite{he2022latent,ho2022imagen,singer2022make,blattmann2023stable,blattmann2023align,chen2023videocrafter1,gupta2024photorealistic,zhao2024identifying,xing2024dynamicrafter,zhao2025controlvideo,zhao2022egsde}. With diffusion transformers (DiTs) demonstrating strong scalability~\cite{bao2023all,peebles2023scalable}, many works have introduced DiT-based large-scale video models~\cite{lin2024open,zheng2024open,polyak2024movie,yang2024cogvideox,hacohen2024ltx,kong2024hunyuanvideo,ma2025step,li2025radial}, such as CogVideoX~\cite{yang2024cogvideox}, Vidu~\cite{bao2024vidu} and Wan2.1~\cite{wan2025wan}. Apart from the full-sequence diffusion models, some works adopt autoregressive next-token prediction to enable video generation~\cite{wu2021godiva,hong2022cogvideo,wu2022nuwa,weissenborn2019scaling,yan2021videogpt,zhao2025ultraimage,zhao2025riflex}, such as NOVA~\cite{deng2024autoregressive} and VideoPoet~\cite{kondratyuk2023videopoet}. Video generation based on full-sequence diffusion models currently achieves better overall quality than autoregressive next-token prediction. However, full-sequence diffusion models must generate all frames in one shot, which incurs substantial latency and prevents displaying frames to users as they are produced, hindering interactivity and real-time use. In contrast, autoregressive models can generate videos in a streaming manner, enabling user interaction.

\paragraph{Autoregressive Diffusion Models for Interactive Video Generation.}
To combine the high quality of diffusion models with the interactivity of autoregressive models, recent works have proposed autoregressive diffusion video models~\cite{jin2024pyramidal,deng2024autoregressive,teng2025magi,chen2025skyreels}. These models adopt a frame-wise autoregressive formulation while using diffusion within each frame, e.g., Pyramid Flow~\cite{jin2024pyramidal}, MAGI-1~\cite{teng2025magi}, and SkyReels-v2~\cite{chen2025skyreels}. Such autoregressive diffusion models can display each frame to the user as soon as it is generated, and can adjust the conditioning for subsequent frames based on user feedback, enabling interactive generation. Nevertheless, interactivity typically requires real-time performance, meaning the generation speed should be comparable to the video playback rate. However, diffusion models rely on multi-step sampling and are therefore too slow to meet this requirement. To address this, recent works such as ASD~\cite{yang2025towards}, CausVid~\cite{yin2025slow} and Self Forcing~\cite{huang2025self} introduce distillation strategies to obtain few-step generation models.

Such real-time, interactive video generation models are highly promising and have broad applications across many domains. One prominent application is video world modeling. HY-WorldPlay~\cite{sun2025worldplay}, RELIC~\cite{hong2025relic}, Hunyuan-GameCraft-2~\cite{tang2025hunyuan}, and Yume-1.5~\cite{mao2025yume} train real-time interactive video models for realistic world simulation, allowing users to freely explore and take actions in the simulated environment. This interactive world-modeling paradigm further enables embodied intelligence, such as closed-loop control in Vidarc~\cite{feng2025vidarc}.

Another major application lies in entertainment and media, supporting interactive content generation~\cite{sun2025streamavatar,ki2026avatar}, including Knot Forcing~\cite{xiao2025knot}, Live avatar~\cite{huang2025live}, and Motionstream~\cite{shin2025motionstream}. Beyond interactivity, these autoregressive diffusion models have also been shown to excel at long-video generation, as demonstrated by Rolling Forcing~\cite{liu2025rolling}, LongLive~\cite{yang2025longlive}, Self-Forcing++~\cite{cui2025self}, and Deep Forcing~\cite{yi2025deep}.

\section{Proofs of Propositions}
We assume that all expectations appearing below are well-defined and finite, and that all probability density functions involved are integrable, which are mild regularity conditions standard in diffusion modeling~\cite{song2020score}.

\subsection{The Flaw of Self Forcing's ODE Distillation}

\label{sec:appendix_proof_asy_ode_is_wrong}

In this section, we first present a formal mathematical statement of Lemma~\ref{lemma:non_injectivity_informal} and provide its proof. Building on this result, we then prove Proposition~\ref{prop:asymmetric_ode}.

\begin{lemma}[Chunk-wise non-injectivity of PF-ODE]
\label{lemma:non-injectivity}
Let $\vx_t\in\mathbb{R}^d$ satisfy the PF-ODE $\mathrm{d} \vx_t = \vv_\theta(x_t,t)\,\mathrm{d}t$ with the unique
solution. Define the flow map
$\vphi:\mathbb{R}^d\times(0,1]\to\mathbb{R}^d$ by $\vphi(\vx_t,t)=\vx_0 \sim p_\text{data}(\vx_0).$ 
Partition coordinates into $\vx_t^{u}:=(\vx_t^{(m)},\ldots,\vx_t^{(n)})$ and
$\vx_t^{z}:=\vx_t^{[d]\setminus\{m,\ldots,n\}}$, with $k:=n-m+1<d$.
If $\vphi(\vx_t,t)^{u}$ is not a.e. constant in $\vx_t^{z}$ for each fixed $(\vx_t^{u},t)$, then
\begin{align}
\label{eq:lemma_part_1}
\forall t\in(0,1],\ \forall \vx_{t}\in\mathbb{R}^d,\ \exists \vy_{t}\in\mathbb{R}^d,\text{such that} \,
\vy_{t}^{u}=\vx_{t}^{u}, \,\text{and} \,\vphi(\vy_{t},t)^{u}\ne \vphi(\vx_{t},t)^{u}.
\end{align}
Moreover, if $\vx_T\sim\mathcal N(\vzero,\mI)$
and $p_t(\vx_t)>0\,\text{for Lebesgue-a.e. }\vx_t$, then
\begin{align}
\label{eq:lemma_part_2}
\mathbb{P}\!\left(\mathrm{Var}\!\left(\vphi(\vx_t,t)^{u}\mid \vx_t^{u},t\right)>0\right)>0.
\end{align}
\end{lemma}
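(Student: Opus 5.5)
Both parts rest on the hypothesis that, for every fixed $(\vx_t^u,t)$, the function $g(\vz):=\vphi((\vx_t^u,\vz),t)^u$ is not a.e.\ constant in $\vz\in\R^{d-k}$. Here $(\vx_t^u,\vz)$ denotes the vector with $u$-coordinates $\vx_t^u$ and remaining coordinates $\vz$. Part 1 is a pointwise consequence of this hypothesis. Part 2 upgrades it to a positive-probability statement by showing that the conditional law of $\vx_t^z$ given $\vx_t^u$ has full support in the Lebesgue sense.

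\textbf{Part 1 (Eq.~(\ref{eq:lemma_part_1})).} Fix $t\in(0,1]$ and $\vx_t$, and write $\vx_t=(\vx_t^u,\vx_t^z)$. Suppose, for contradiction, that every $\vy_t$ with $\vy_t^u=\vx_t^u$ satisfies $\vphi(\vy_t,t)^u=\vphi(\vx_t,t)^u$. Then $g(\vz)=\vphi(\vx_t,t)^u$ for every $\vz\in\R^{d-k}$, so $g$ is constant everywhere and hence a.e. This contradicts the hypothesis. Hence some $\vz^\ast$ gives $g(\vz^\ast)\neq g(\vx_t^z)$, and $\vy_t:=(\vx_t^u,\vz^\ast)$ is the required point. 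Uniqueness of ODE solutions makes $\vphi$ a well-defined single-valued map, so this step is essentially a direct negation.

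\textbf{Part 2 (Eq.~(\ref{eq:lemma_part_2})).} The conditional variance is
\[
\Var(\vphi(\vx_t,t)^u\mid\vx_t^u)=\E\bigl[\|g(\vx_t^z)-\E[g(\vx_t^z)\mid\vx_t^u]\|^2\mid \vx_t^u\bigr].
\]
It is computed with respect to the conditional density $p_t(\vz\mid\vx_t^u)=p_t(\vx_t^u,\vz)/p_t^u(\vx_t^u)$, which is defined wherever the marginal density $p_t^u(\vx_t^u)>0$.
\begin{enumerate}
\item Apply Fubini to the assumption $p_t>0$ Lebesgue-a.e. This gives, for a.e.\ $\vx_t^u$, that $p_t(\vx_t^u,\cdot)>0$ for a.e.\ $\vz$. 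In particular, $p_t^u(\vx_t^u)>0$ for a.e.\ $\vx_t^u$.
\item For such $\vx_t^u$, the conditional law of $\vx_t^z$ is mutually absolutely continuous with Lebesgue measure on $\R^{d-k}$.
\item Suppose the conditional variance vanished. Then $g(\vz)=c$ for $p_t(\cdot\mid\vx_t^u)$-a.e.\ $\vz$, hence for Lebesgue-a.e.\ $\vz$ by mutual absolute continuity. This contradicts the hypothesis.
\item Therefore $\Var(\cdot\mid\vx_t^u)>0$ for Lebesgue-a.e.\ $\vx_t^u$. This set has full $p_t^u$-measure, so the probability is $1>0$.
\end{enumerate}

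\textbf{Main obstacle.} The delicate step is the measure-theoretic bookkeeping in Part 2. Conditional expectations are defined only up to null sets, so I will work with the explicit density version above. I also need Fubini to transfer full support from the joint density $p_t$ to a.e.\ fiber, and I need $\vphi$ to be measurable. Measurability follows from continuity of the flow map, given the Lipschitz/regularity assumptions standard for PF-ODEs. The Gaussian initialization $\vx_T\sim\gN(\vzero,\mI)$ is not used beyond guaranteeing that $p_t$ is a genuine density, whose positivity is assumed directly.
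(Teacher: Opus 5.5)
Your proof is correct. Part 1 is the same as the paper's argument: if no $\vz$ changed the value, then $\vz\mapsto\vphi((\vx_t^u,\vz),t)^u$ would be everywhere constant, so you take $\vy_t=(\vx_t^u,\vz^\ast)$.

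For Part 2 you take a different and tighter route. The paper builds on Part 1. It takes the two points $\vz_1,\vz_2$ with distinct values and appeals to ``standard measure-theoretic arguments'' to claim that a neighborhood of $\vz_2$ contains uncountably many points with distinct values. Together with the nondegenerate density, it declares this equivalent to the positive-probability statement. Made precise, that sketch needs continuity of $\vphi$, so that the distinct values persist on open sets of positive Lebesgue measure rather than merely on uncountable sets. It also needs a Fubini-type step to ensure the conditional law of $\vx_t^z$ actually charges those sets.

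You bypass Part 1 entirely. Fubini shows that for a.e.\ $\vx_t^u$ the fiber density $p_t(\vx_t^u,\cdot)$ is positive a.e., so the conditional law of $\vx_t^z$ is equivalent to Lebesgue measure. Vanishing conditional variance would then force Lebesgue-a.e.\ constancy, which contradicts the hypothesis directly. Your argument needs only measurability of $\vphi$, plus the paper's blanket finiteness assumption so that the conditional mean exists. It also gives the stronger conclusion that the conditional variance is positive almost surely.

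The neighborhood approach has one advantage if combined with joint continuity of $\vphi$ in $(\vx_t^u,\vx_t^z)$. It would need the non-constancy hypothesis at only a single value of $\vx_t^u$, at the cost of yielding positive rather than full probability.
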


\begin{proof}
Fix $t\in(0,1]$. Write any $\vx\in\R^d$ as $\vx=(\vx^u,\vx^z)$ with $\vx^u\in\R^k$ and $\vx^z\in\R^{d-k}$. Fix an arbitrary $\vx_{t}\in\R^d$ and denote $\vu_1:=\vx_{t}^u$ and $\vz_1:=\vx_{t}^z$.
Define the measurable map
\begin{align}
\vf_{\vu_1,t}:\R^{d-k}\to\R^k,\qquad \vf_{\vu_1,t}(\vz):=\vphi((\vu_1,\vz),t)^u .
\end{align}
By assumption, for this fixed $(\vu_1,t)$, the function $\vz\mapsto \vf_{u_1,t}(\vz)$ is \emph{not} a.e.\ constant
(with respect to Lebesgue measure on $\R^{d-k}$).
We claim that then for every $\vz_1\in\R^{d-k}$ there exists some $\vz_2\in\R^{d-k}$ such that
$\vf_{\vu_1,t}(\vz_2)\neq \vf_{\vu_1,t}(\vz_1)$.
Indeed, if for some $\vz_1$ one had $\vf_{\vu_1,t}(\vz)=\vf_{\vu_1,t}(\vz_1)$ for all $\vz$, then $\vf_{\vu_1,t}$ would be
everywhere constant, contradicting the assumption.

Choose such a $\vz_2$ for the above $\vz_1$, and set $\vy_{t}:=(\vu_1,\vz_2)$.
Then $\vy_{t}^u=\vx_{t}^u$ while
\begin{align}
    \vphi(\vy_{t},t)^u=\vf_{\vu_1,t}(\vz_2)\neq \vf_{\vu_1,t}(\vz_1)=\vphi(\vx_{t},t)^u,
\end{align}
which proves the expression (\ref{eq:lemma_part_1}).

Combining the above result with the fact that $\vx_T\sim\gN(\vzero,\mI)$ and $\vx_t$ admits a nondegenerate probability density, standard measure-theoretic arguments~\cite{kallenberg1997foundations,evans2018measure} imply the following:
for the above $\vz_1, \vz_2$, in a neighborhood of $\vz_2$ there exist uncountably many $\vz_k$, each of which maps to a distinct $\vphi(\vx_t,t)^{u}$, just as $\vz_2$ does. Equivalently, $
\mathbb{P}\!\left(\Var\!\left(\vphi(\vx_t,t)^{u}\mid\vx_t^{u},t\right)>0\right) \;>\; 0.$
\end{proof}
We next prove Proposition~\ref{prop:asymmetric_ode}. First, we formalize this in the following statement.
\begin{proposition}[Distribution mismatch in chunk-wise regression]
\label{prop:appendix_ode}
    Using the notation of Lemma~\ref{lemma:non-injectivity}, and for each fixed $(\vx_t^{u}, t)$, $\vphi(\vx_t, t)^{u}$ is not a.e. constant in $\vx_t^{z}$. Consider training a chunk-level model
    $G_\theta:\mathbb{R}^k \times (0,1] \to \mathbb{R}^k$ with the regression target
    \begin{align}
            \theta^* = \min_\theta \E_{\vx_t,t}\!\left[\left\|G_\theta(\vx_t^u, t)-\vx_0^u\right\|^2\right],
    \end{align}
    where $\vx_0 = \vphi(\vx_t,t)$. Then the optimal solution is the conditional mean, which does not follow the chunk-wise data distribution, i.e., 
    \begin{align}
        G_{\theta}^*(\vx_t^u,t) = \E[\vx_0^u \mid \vx_t^u,t]\nsim p_{\mathrm{data}}(\vx_0^u).
    \end{align}
\end{proposition}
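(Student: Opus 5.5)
The proof has two parts. First, identify the minimizer of the regression objective. Second, show that this minimizer cannot be distributed as $p_{\mathrm{data}}(\vx_0^u)$, using the nondegenerate conditional variance from Lemma~\ref{lemma:non-injectivity}.

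\textbf{Step 1: the minimizer is the conditional mean.} For a fixed $t$, I would use the tower property to decompose the objective:
\begin{align*}
\E\|G(\vx_t^u,t)-\vx_0^u\|^2 = \E\big\|G(\vx_t^u,t)-\E[\vx_0^u\mid\vx_t^u,t]\big\|^2 + \E\big[\mathrm{tr}\,\Var(\vx_0^u\mid\vx_t^u,t)\big].
\end{align*}
The cross term vanishes because $\vx_0^u-\E[\vx_0^u\mid\vx_t^u,t]$ has zero conditional mean given $\vx_t^u$. The second term does not depend on $G$. Assuming the model class is expressive enough, the minimizer is therefore $G^*_\theta(\vx_t^u,t)=\E[\vx_0^u\mid\vx_t^u,t]$ almost everywhere. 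This is the standard result for regression under squared loss \cite{bishop2006pattern}, and finite second moments are covered by the regularity assumptions.

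\textbf{Step 2: the output law is not the data law.} Write $\vm(\vx_t^u):=\E[\vx_0^u\mid\vx_t^u,t]$. The flow map is bijective, so $\vx_0=\vphi(\vx_t,t)\sim p_{\mathrm{data}}$, and the target $\vx_0^u$ has marginal $p_{\mathrm{data}}(\vx_0^u)$. I would compare second moments via the law of total variance:
\begin{align*}
\mathrm{tr}\,\Var(\vx_0^u) = \mathrm{tr}\,\Var\big(\vm(\vx_t^u)\big) + \E\big[\mathrm{tr}\,\Var(\vx_0^u\mid\vx_t^u,t)\big].
\end{align*}
By part (\ref{eq:lemma_part_2}) of Lemma~\ref{lemma:non-injectivity}, which applies because the non-constancy hypothesis holds, $\Var(\vx_0^u\mid\vx_t^u,t)$ is nonzero on a set of positive probability. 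The last expectation is therefore strictly positive. It follows that $\mathrm{tr}\,\Var(G^*_\theta(\vx_t^u,t))<\mathrm{tr}\,\Var(\vx_0^u)$, while the two means agree. Two distributions with equal means and different total variances cannot coincide, so the law of $G_\theta^*(\vx_t^u,t)$ is not $p_{\mathrm{data}}(\vx_0^u)$. This argument also makes the ``blurring'' interpretation precise: the conditional mean is a strict variance contraction.

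\textbf{Main obstacle.} The delicate point is the passage from Lemma~\ref{lemma:non-injectivity} to a strictly positive expected conditional variance. The lemma's second claim is stated somewhat informally. I would make it rigorous using the density assumption. Because $p_t>0$ a.e., the conditional law of $\vx_t^z$ given $\vx_t^u$ is equivalent to Lebesgue measure on $\R^{d-k}$ for a.e.\ $\vx_t^u$. Since $\vz\mapsto\vphi((\vx_t^u,\vz),t)^u$ is not a.e.\ constant, it is not almost surely constant under this conditional law. This gives strictly positive conditional variance for a.e.\ $\vx_t^u$, which is more than enough. Finiteness of the second moments ensures that the variance comparison is meaningful.
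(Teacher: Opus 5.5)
Your proof is correct and follows the paper's route. The paper likewise takes the minimizer to be the conditional mean $\widehat Y=\E[\vx_0^u\mid\vx_t^u,t]$, then combines the uncentered projection identity $\E\|\vx_0^u\|^2=\E\|\widehat Y\|^2+\E[\Var(\vx_0^u\mid\vx_t^u,t)]$ with part~(\ref{eq:lemma_part_2}) of Lemma~\ref{lemma:non-injectivity} to get a strict second-moment deficit, which is your law-of-total-variance argument. Your Fubini-style step, showing the conditional law of $\vx_t^z$ given $\vx_t^u$ is equivalent to Lebesgue measure, is a tighter version of what the paper only attributes to ``standard measure-theoretic arguments''; one small slip is that $\vx_0\sim p_{\mathrm{data}}$ comes from the PF-ODE preserving marginals (part of the lemma's setup), not from bijectivity of the flow map.
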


Since DiT-based bidirectional diffusion models contain attention modules that are not constant as evidenced by attention-map experiments in prior works~\cite{xi2025sparse, zhao2025ultravico}, the condition in Lemma~\ref{lemma:non-injectivity} holds: for each fixed $(\vx_t^{u}, t)$, $\vphi(\vx_t, t)^{u}$ is not a.e. constant in $\vx_t^{z}$. We then prove Proposition~\ref{prop:appendix_ode} using the expression (\ref{eq:lemma_part_2}).

\begin{proof}
Let $t\sim q(t)$ be the time sampling used in training and let $\vx_0=\phi(\vx_t,t)$. Denote
\begin{align}
    U:=\vx_t^u\in\R^k,\qquad Y:=\vx_0^u\in\R^k,\qquad \widehat Y:=\E[Y\mid U,t].
\end{align}
By the standard squared-loss regression result~\cite{bishop2006pattern}, the minimizer satisfies
\begin{align}
    G_\theta^*(U,t)=\widehat Y=\E[\vx_0^u\mid \vx_t^u,t].
\end{align}

It remains to show $\widehat Y \nsim Y$ (hence $\widehat Y\nsim p_{\mathrm{data}}(\vx_0^u)$). Using the
$L^2$-orthogonal projection identity~\cite{bishop2006pattern},
\begin{align}
    \E\|Y\|^2
=\E\|\widehat Y\|^2+\E\|Y-\widehat Y\|^2
=\E\|\widehat Y\|^2+\E\!\left[\Var(Y\mid U,t)\right],
\end{align}
where $\Var(Y\mid U,t):=\E[\|Y-\E[Y\mid U,t]\|^2\mid U,t]$.
By Lemma~\ref{lemma:non-injectivity} (expression \ref{eq:lemma_part_2}), 
$\mathbb{P}(\Var(Y\mid U,t)>0)>0$, hence $\E[\Var(Y\mid U,t)]>0$ and therefore
\begin{align}
    \E\|Y\|^2>\E\|\widehat Y\|^2.
\end{align}

If $\widehat Y$ and $Y$ have the same distribution, then $\E\|\widehat Y\|^2=\E\|Y\|^2$, a contradiction.
Thus $\widehat Y \nsim Y$, i.e.,
\begin{align}
    G_\theta^*(\vx_t^u,t)=\E[\vx_0^u\mid \vx_t^u,t]\nsim p_{\mathrm{data}}(\vx_0^u).
\end{align}
\end{proof}

\subsection{Distribution Mismatch in Autoregressive Diffusion Forcing}
\label{sec:appendix_proof_df_is_wrong}
In this section, we first present the basic regularity assumptions and then prove Proposition~\ref{prop:df_is_wrong}. 

Let $Y:=\vx_0^{<i}$, $X:=\vx_0^i$, and let $Z:=\vx_t^{<i}$ be obtained by independently noising each frame of $Y$ via the forward kernel $q_{t\mid 0}(Z\mid Y)$ at some fixed $t>0$. We have the following mild assumptions.

\begin{itemize}
    \item \textbf{(A1)} The model yields the optimal conditional distribution under the diffusion training objective, denoted as $p_{\mathrm{DF}}(X\mid Z)=:p_{\mathrm{DF}}(\vx_0^i\mid \vx_t^{<i}) =p_{\mathrm{data}}(x\mid z)$. This is a trivial assumption widely used~\cite{song2020score}.
    \item \textbf{(A2)} $X$ is not independent of $Y$ under $p_{\mathrm{data}}(X,Y)$, i.e., $p_{\mathrm{data}}(X\mid Y=y)$ is not $p_{\mathrm{data}}(Y)$-a.e.\ constant. The intuitive understanding of this assumption is that different frames within the same video are not independent, but are closely related.
    \item \textbf{(A3)} The posterior kernel \( p_{\mathrm{data}}(Y \mid Z=y) \) is positive and non-degenerate on the support of \( p_{\mathrm{data}}(Y) \). In particular, for \(y \in \text{supp}(p_{\mathrm{data}}(Y))\), the density of \(p_{\mathrm{data}}(Y \mid Z=y)\) is positive.
\end{itemize}

\begin{proof}
By \textbf{(A1)}, querying the DF-trained model at a \emph{clean} prefix value $y$ induces
\begin{align}
p_{\mathrm{DF}}(x\mid y)
=
p_{\mathrm{DF}}(x\mid Z=y)
=
p_{\mathrm{data}}(x\mid Z=y).
\end{align}
Therefore, it suffices to prove
\begin{align}
\mathbb E_{Y}\Big[\KL\big(p_{\mathrm{data}}(X\mid Z=Y)\,||\,p_{\mathrm{data}}(X\mid Y)\big)\Big] > 0.
\label{eq:df_goal}
\end{align}

We prove by contradiction. Assume for contradiction that the left-hand side of expression (\ref{eq:df_goal}) equals $0$.
This implies
\begin{align}
p_{\mathrm{data}}(X\mid Z=y)=p_{\mathrm{data}}(X\mid Y=y)
\qquad \text{for }p_{\mathrm{data}}(Y)\text{-a.e. }y.
\label{eq:df_ae_equal}
\end{align}
Fix any measurable set $A$ in the sample space of $X$ and define
\begin{align}
f_A(y):=\mathbb P_{\mathrm{data}}(X\in A\mid Y=y).
\end{align}
By Eq. (\ref{eq:df_ae_equal}), for $p_{\mathrm{data}}(Y)$-a.e.\ $y$,
\begin{align}
\mathbb P_{\mathrm{data}}(X\in A\mid Z=y)=f_A(y).
\label{eq:df_event_equal}
\end{align}

On the other hand, since $Z$ is generated from $Y$ via independent noising, we have the Markov chain $X\to Y\to Z$ under $p_{\mathrm{data}}$.
Thus, by the tower property~\cite{kallenberg1997foundations},
\begin{align}
\mathbb P_{\mathrm{data}}(X\in A\mid Z=y)
&=\mathbb E_{\mathrm{data}}\!\left[\mathbb P_{\mathrm{data}}(X\in A\mid Y)\,\middle|\, Z=y\right] \\
&=\mathbb E_{\mathrm{data}}\!\left[f_A(Y)\,\middle|\, Z=y\right].
\label{eq:df_tower}
\end{align}
Combining Eq. (\ref{eq:df_event_equal}) and Eq. (\ref{eq:df_tower}) yields
\begin{align}
f_A(y)=\mathbb E_{\mathrm{data}}\!\left[f_A(Y)\,\middle|\, Z=y\right]
\qquad \text{for }p_{\mathrm{data}}(Y)\text{-a.e. }y.
\label{eq:df_fixed_point}
\end{align}

By the regularity conditions (\textbf{A3}), the conditional expectation operator
$Tf(y):=\mathbb E[f(Y)\mid Z=y]$ admits only $p_{\mathrm{data}}(Y)$-a.e.\ constant bounded fixed points. Applying this fact to Eq. (\ref{eq:df_fixed_point}) implies that $f_A$ is $p_{\mathrm{data}}(Y)$-a.e.\ constant for every measurable $A$.
Hence $p_{\mathrm{data}}(X\mid Y=y)$ is $p_{\mathrm{data}}(Y)$-a.e.\ constant, i.e., $X \perp\!\!\!\perp Y,$ which contradicts \textbf{(A2)}. Therefore, the contradiction assumption is false, and the expression (\ref{eq:df_goal}) holds. Consequently,
\begin{align}
\mathbb E_{y\sim p_{\mathrm{data}}(Y)}\Big[\KL\big(p_{\mathrm{DF}}(X\mid y)\,||\,p_{\mathrm{data}}(X\mid y)\big)\Big] > 0.
\end{align}
\end{proof}

\section{More Discussion of Our Method}
\subsection{Further Remarks on Autoregressive Diffusion Training Strategies}
\label{sec:appendix_df_tf_and_others}

In this section, we first provide further remarks on diffusion forcing, and then report results for other training strategies, including PFVG~\cite{wu2025pack}, BAgger~\cite{po2025bagger}, and Resampling Forcing~\cite{guo2025end}.

As stated in Proposition~\ref{prop:df_is_wrong}, applying diffusion forcing to autoregressive diffusion training is suboptimal. However, this does not render diffusion forcing useless. Specifically, diffusion forcing was originally introduced to train a bidirectional diffusion model for video continuation to enable long-video generation~\cite{song2025history}. In this setting, continuation at inference time concatenates a clean prefix (the tail frames of the given video) with noise, which matches the training setup and thus avoids a train--inference mismatch. Therefore, this bidirectional diffusion forcing regime is not covered by our suboptimality claim. Moreover, diffusion forcing allows different frames to have different noise levels. Even in autoregressive diffusion training, this is not an issue, since each frame is actually trained independently. \textit{The only practice we refute is conditioning on a noisy prefix for autoregressive diffusion training, as proposed by CausVid~\cite{yin2025slow} and recent works (e.g., LiveAvatar~\cite{huang2025live}).}

Apart from diffusion forcing and teacher forcing, we also experiment with several recent alternatives, including PFVG~\cite{wu2025pack}, BAgger~\cite{po2025bagger}, and Resampling Forcing~\cite{guo2025end}. However, as shown in Tab.~\ref{tab:appendix_different_causal_diffusion}, these methods provide no significant improvement over teacher forcing. Notably, most of them are primarily designed for long-video training and generation, so the limited gains in our 5s setting are understandable. We leave a deeper investigation of these strategies for future work.

\begin{table}
\small\setlength{\tabcolsep}{1.5pt} 
\caption{\textbf{Quantitative comparison of Autoregressive diffusion training strategies.} The recent works perform on par with teacher forcing in our setting.}
  \centering
  \begin{tabular}{lcccc}
      \toprule
      Method 
      & Dynamic Degree$\uparrow$
      & VisionReward$\uparrow$
      & Instruction Following$\uparrow$
       \\
       \midrule
     Teacher Forcing  & 50 & \textbf{3.343} & \textbf{32}\\
      PFVG~\cite{wu2025pack}  & \textbf{61} & 2.857&\textbf{32}\\
      BAgger~\cite{po2025bagger}  & 53 & 2.715& 28 \\
      Resampling Forcing~\cite{guo2025end} & 51 & 3.336&22\\
    \bottomrule
  \end{tabular}\\
  \label{tab:appendix_different_causal_diffusion}
\end{table}

\subsection{Multi-Step Autoregressive Diffusion as Initialization for Asymmetric DMD}
\label{sec:appendix_diff_as_dmd_init}

In this section, we investigate directly using a teacher forcing-trained multi-step autoregressive diffusion model to initialize asymmetric DMD. We find that, compared to Self Forcing's ODE distillation initialization, multi-step autoregressive diffusion initialization yields substantial improvements in both dynamics and visual quality, as illustrated in Fig.~\ref{fig:appendix_diff_as_dmd_init} (middle vs. left).

Despite these improvements, the multi-step autoregressive diffusion model only narrows the bidirectional-to-causal architectural gap under multi-step sampling (e.g., 50 steps) and does not fully resolve it in the few-step regime. Specifically, under few-step sampling, autoregressive generation induces an additional mismatch in the conditional context: the $i$-th frame conditions on preceding frames $0\sim i-1$, whose quality degrades at low step counts, whereas training conditions on a high-quality ground-truth prefix. As a result, this degraded conditioning accumulates throughout autoregressive generation, causing error propagation across chunks. As illustrated in Fig.~\ref{fig:appendix_four_step_diff} (top), before the DMD stage, we evaluate the autoregressive diffusion model on 4-step generation only; it exhibits abrupt transitions between chunks, indicating that a substantial architectural gap remains in the few-step setting.

This analysis suggests that it's necessary to convert the multi-step autoregressive diffusion model to a few-step model before using it to initialize DMD. As illustrated in Fig.~\ref{fig:appendix_four_step_diff} (bottom), the causal ODE-distilled model exhibits stronger temporal consistency under few-step sampling, making it a more suitable DMD initialization. Consistently, Fig.~\ref{fig:appendix_diff_as_dmd_init} (middle vs.\ right) further shows that replacing multi-step autoregressive diffusion initialization with causal ODE initialization yields clear gains in the subsequent DMD stage. This is also supported by the quantitative results in Tab.~\ref{tab:appendix_diff_as_dmd_init}, where causal ODE attains markedly better VisionReward and Instruction Following scores.

\begin{table}
\small\setlength{\tabcolsep}{1.5pt} 
\caption{\textbf{Quantitative comparison of DMD with different initializations.} DMD with Self Forcing's ODE initialization shows weak dynamics and low visual quality. Initializing with a Teacher Forcing-trained autoregressive diffusion model yields a large improvement, while causal ODE initialization achieves the best overall quality and VisionReward score.}
  \centering
  \begin{tabular}{lccccccc}
      \toprule
      Method & Total$\uparrow$ & Quality$\uparrow$ & Semantic$\uparrow$ 
      & Dynamic Degree$\uparrow$
      & VisionReward$\uparrow$
      & Instruction Following$\uparrow$
       \\
       \midrule
    Self Forcing's ODE + DMD & 82.00 & 82.18 & 81.29 & 24 & 3.330 & 38\\
    Autoregressive diffusion + DMD  & 84.02 & \textbf{84.72} & 81.23& 66  & 5.863 & 48\\
    Causal ODE + DMD  & \textbf{84.04} & 84.59& \textbf{81.84} & \textbf{68} & \textbf{6.326} & \textbf{56}\\
    \bottomrule
  \end{tabular}\\
  \label{tab:appendix_diff_as_dmd_init}
\end{table}

\begin{figure}
    \centering
    \begin{tabular}{
        >{\centering\arraybackslash}m{0.12\textwidth} |
        >{\centering\arraybackslash}m{0.58\textwidth} 
    }
    \toprule
    \small{\textbf{Type}} & \small{\textbf{Generated video with 4 steps}}\\
    \midrule
    \small{\textbf{Autoregressive diffusion}} &
    \includegraphics[width=\linewidth]{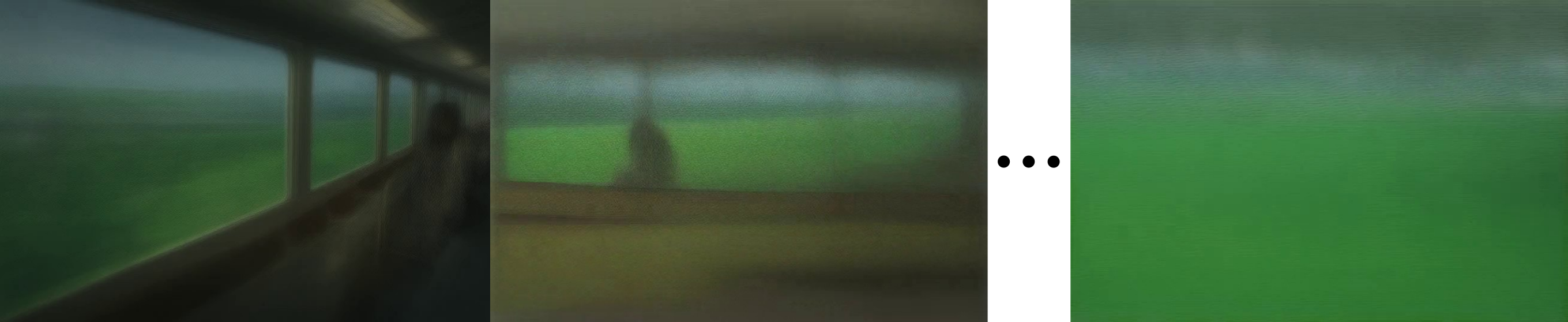} 
    \\
    \midrule
    \small{\textbf{Causal ODE distillation}} &
    \includegraphics[width=\linewidth]{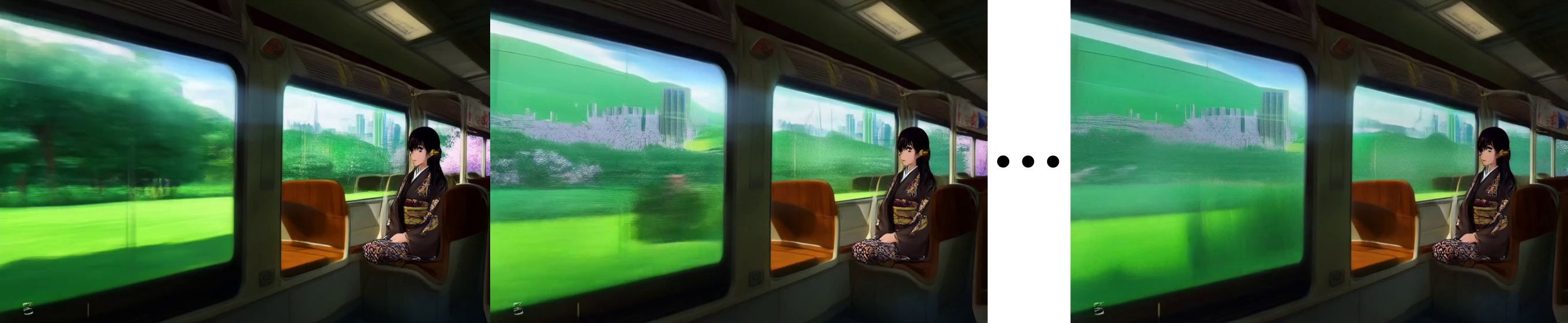} 
    \\
    \bottomrule
    \end{tabular}
\caption{\textbf{Performance comparison with 4-step generation before the DMD stage.} Without having reached the DMD stage yet, we directly compare the 4-step generation of the autoregressive diffusion model with the 4-step generation of the causal ODE-distilled model. Autoregressive diffusion exhibits inter-frame abrupt changes, indicating suboptimal causality under 4 steps, whereas the causal ODE–distilled model remains more stable.
}
\label{fig:appendix_four_step_diff}
\end{figure}
\begin{figure*}
    \centering
    \begin{tabular}{
        >{\centering\arraybackslash}m{0.3\textwidth} |
        >{\centering\arraybackslash}m{0.3\textwidth} |
        >{\centering\arraybackslash}m{0.3\textwidth}
    }
    \toprule
    \small{\textbf{Asymmetric DMD with Self Forcing's ODE initialization}} & 
    \small{\textbf{Asymmetric DMD with autoregressive diffusion initialization}} & 
    \small{\textbf{Asymmetric DMD with causal ODE initialization}}\\ 
    \midrule
    \includegraphics[width=\linewidth]{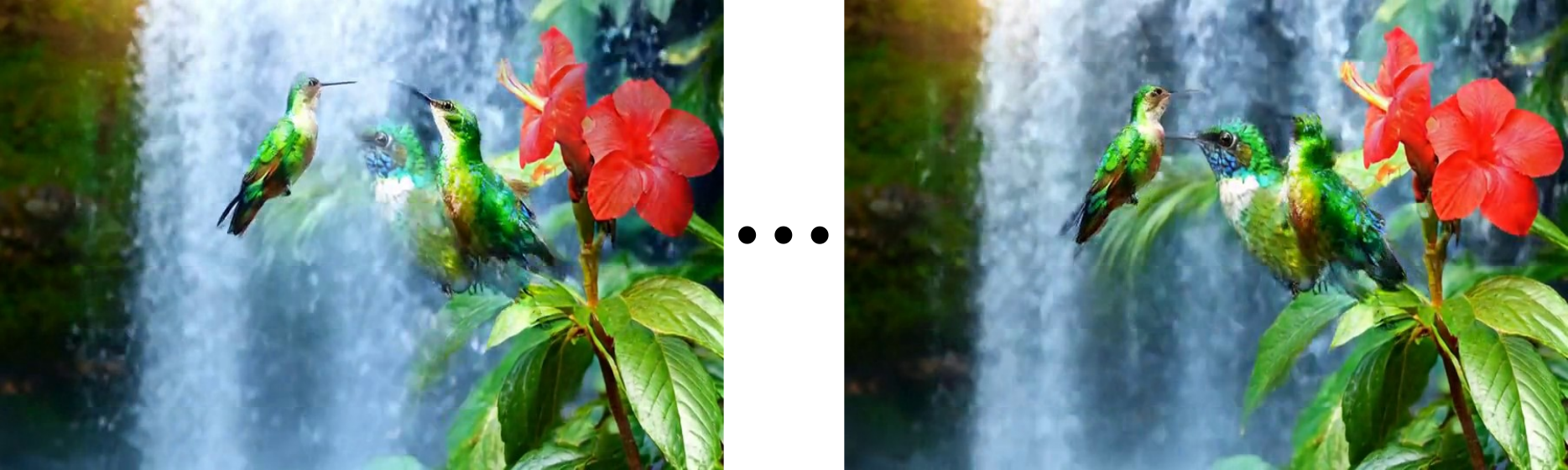} &
    \includegraphics[width=\linewidth]{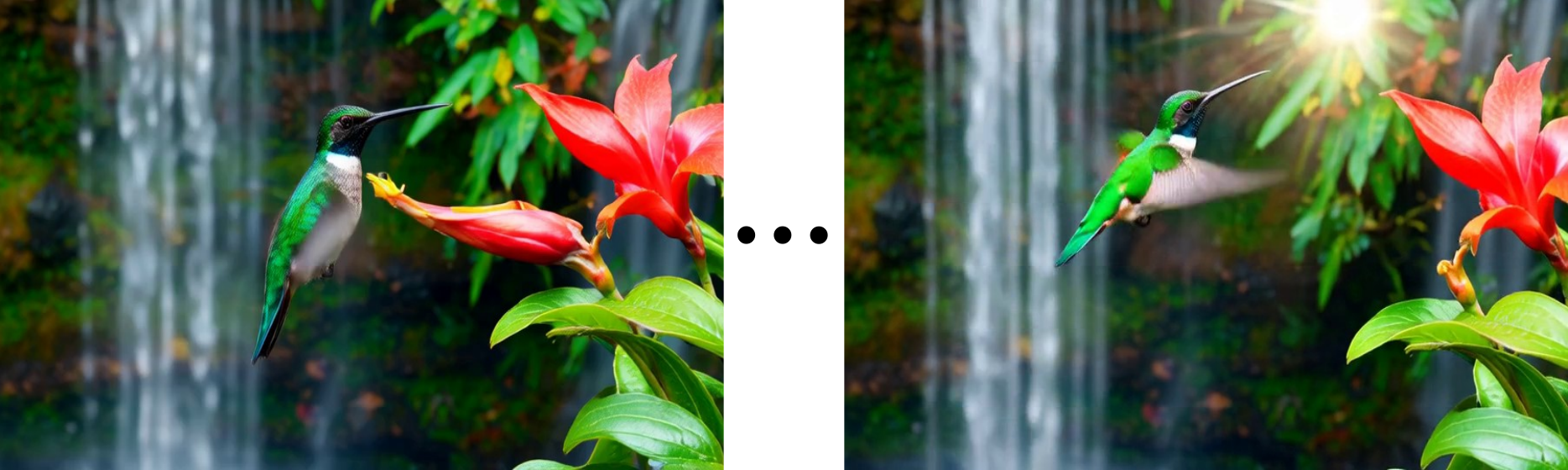}  &
    \includegraphics[width=\linewidth]{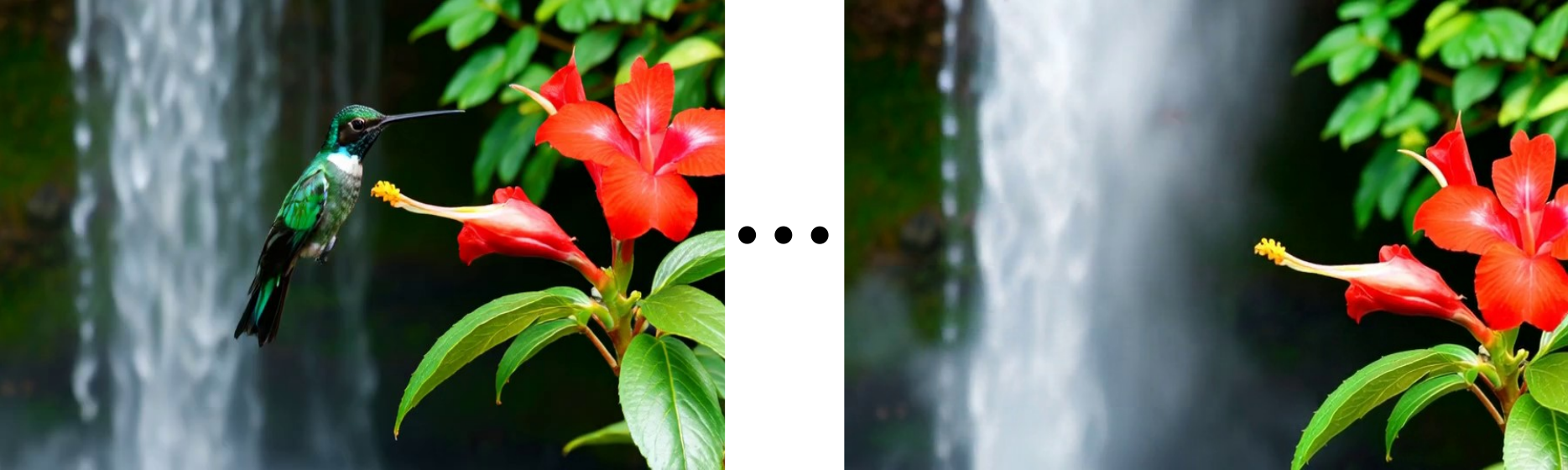} 
    \\
    \bottomrule
    \end{tabular}
\caption{\textbf{Performance comparison of DMD with different initialization.} DMD with Self Forcing's ODE initialization shows weak dynamics and abrupt artifacts. Initializing with TF-trained autoregressive diffusion brings a large improvement but still exhibits abrupt changes (e.g., two red flowers turning into one), whereas causal ODE initialization yields the highest quality and the most stable results.}
\label{fig:appendix_diff_as_dmd_init}
\end{figure*}

\subsection{Causal ODE Distillation from Bidirectional Initial Model}
\label{sec:apendix_causal_ode_bi_init}
Recall from Sec.~\ref{sec:analysis} that we claim that we should adopt causal ODE distillation rather than Self Forcing's ODE distillation. For causal ODE distillation, the paired data $(\vx_t^i,\vx_0^i)$ are generated by an autoregressive diffusion model, and the student is initialized from this autoregressive diffusion model as well. For Self Forcing's ODE distillation, the paired data are generated by a bidirectional diffusion model, and the student is initialized from this bidirectional diffusion model as well. Though our main argument attributes the performance gap of these two methods to how the paired data are constructed, yet these two methods also differ in their initialization, raising a natural question: \textit{is the observed difference in Fig.~\ref{fig:injectivity_of_ode_distill} mainly driven by the paired data construction, or by the initialization difference?}

To answer this question, we use paired data synthesized by the autoregressive diffusion model, i.e., $\mathcal{D}_{\text{Causal}}$ in Sec.~\ref{sec:experiment}, while initializing the student from the bidirectional diffusion model. As shown in Fig.~\ref{fig:appendix_ode_distill_bi_init}, the resulting quality is comparable to initializing from the autoregressive diffusion model, still much better than that of the Self Forcing's ODE distillation. This indicates that the performance gap in ODE distillation is not primarily due to student initialization, but rather to the distillation setup: the teacher should be an autoregressive diffusion model rather than a bidirectional one.

\begin{figure*}
    \centering
    \begin{tabular}{
        >{\centering\arraybackslash}m{0.3\textwidth} |
        >{\centering\arraybackslash}m{0.3\textwidth} |
        >{\centering\arraybackslash}m{0.3\textwidth}
    }
    \toprule
    \small{\textbf{Self Forcing's ODE distillation, bidirectional initial model}} & 
    \small{\textbf{Causal ODE distillation, causal initial model}} & 
    \small{\textbf{Causal ODE distillation, bidirectional initial model}}\\ 
    \midrule
    \includegraphics[width=\linewidth]{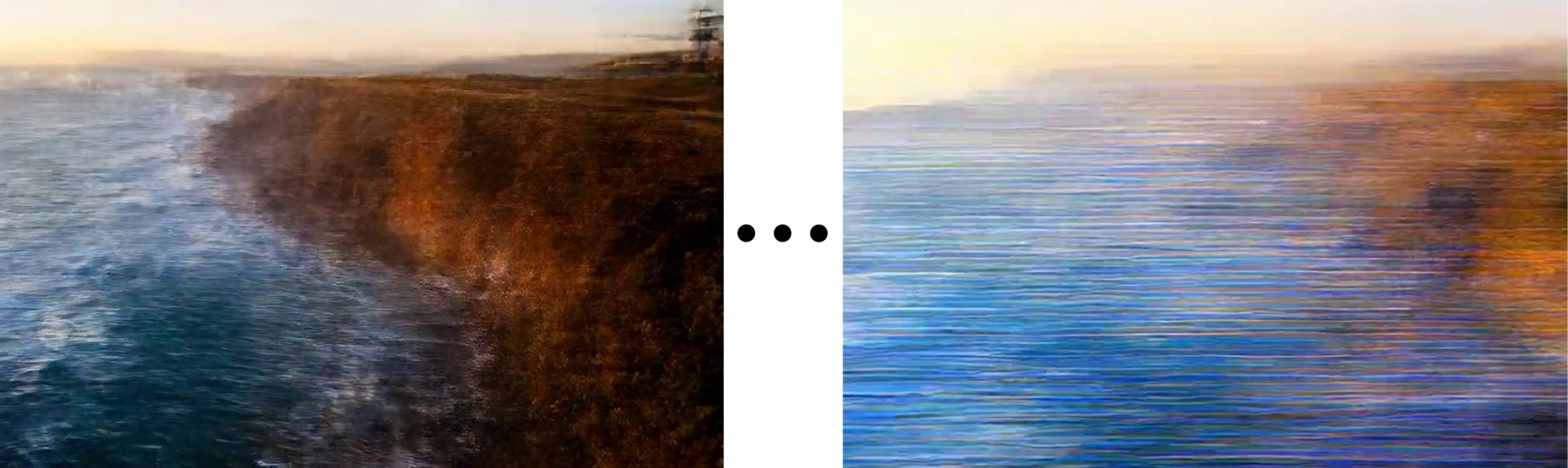} &
    \includegraphics[width=\linewidth]{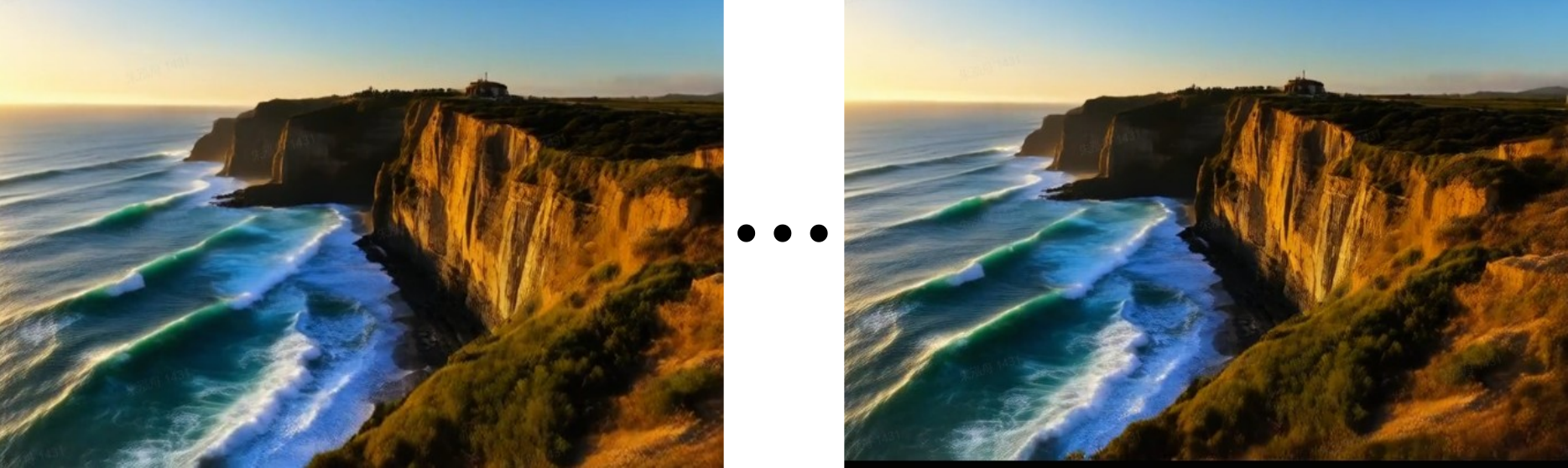}  &
    \includegraphics[width=\linewidth]{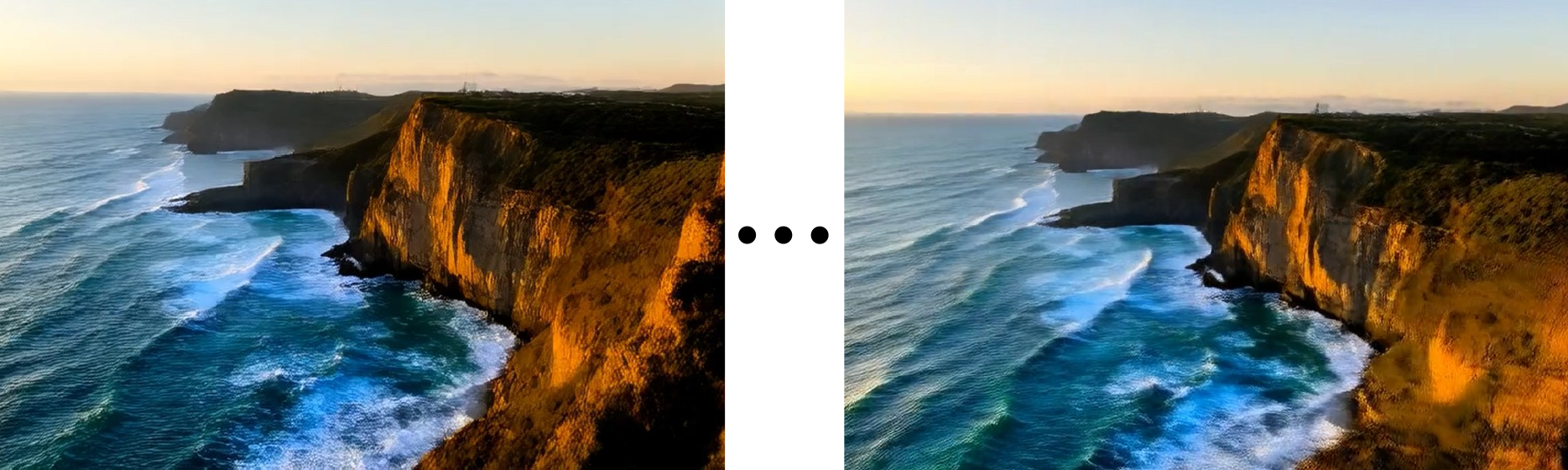} 
    \\
    \bottomrule
    \end{tabular}
\caption{\textbf{Student initialization is not the bottleneck of ODE distillation.} With causal ODE distillation, the student with a bidirectional initial model achieves similar performance to that with a causal initial model, both better than Self Forcing's ODE distillation.}
\label{fig:appendix_ode_distill_bi_init}
\end{figure*}

\section{More Implementation Details}
\label{sec:appendix_more_implementation}
In this section, we provide more details of Sec.~\ref{sec:experiment}.

\paragraph{Training details of our method.} 

We first construct a dataset $\mathcal{D}_\text{Bi}$ consisting of about 3K samples generated by Wan (bidirectional)~\cite{wan2025wan} with the VidProM~\cite{wang2024vidprom} prompts and train a teacher forcing autoregressive diffusion model for 2K steps. Next, we sample ODE trajectories from the autoregressive diffusion model to construct a causal ODE dataset $\mathcal{D}_{\text{Causal}}$ with 3K samples. Notably, since the causal ODE distillation conducts teacher forcing conditioned on the ground-truth clean data, we save the corresponding relationship between each data point in $\mathcal{D}_{\text{Causal}}$ and $\mathcal{D}_\text{Bi}$. Throughout training, including the teacher forcing autoregressive diffusion model and both ODE-initialization variants, we use either $\gD_{\text{Bi}}$ or $\gD_{\text{Causal}}$, both internally synthesized by the model, and using about the same prompts, thus guaranteeing that there is no gap in terms of data quality. This ensures the fairness of the comparison in the ablation study. Then, we perform causal ODE distillation on $\mathcal{D}_{\text{Causal}}$ for 1K steps via teacher forcing, conditioned on the corresponding clean data in $\mathcal{D}_\text{Bi}$. In this stage, the ODE student is initialized from the autoregressive diffusion teacher. Finally, we use this model as initialization for the standard asymmetric DMD, where $s_\text{real}$ is Wan2.1-14B, and $s_\text{fake}$ is Wan2.1-1.3B, strictly following the setting of Self Forcing~\cite{huang2025self} to guarantee the fair comparison. For both the chunk-wise and frame-wise settings, we adopt the same overall formulation and pipeline.

Throughout all training stages, we use a batch size of 64 and the Adam optimizer with a learning rate of $2\times10^{-6}$, $\beta_1=0$, and $\beta_2=0.999$, while keeping all other settings identical to Self Forcing. During inference, we use 4-step sampling with timesteps shared by causal ODE initialization and asymmetric DMD, i.e., $1$, $0.9375$, $0.8333$, and $0.625$.

For the extended causal consistency distillation, we adopt the LCM~\cite{luo2023latent} scheme with 48 discretized timesteps, using the UniPC ODE solver with an EMA rate of 0.99. We train the model for 3K steps on $\gD_\text{Bi}$, and inference also uses 4 steps, with the same timesteps as DMD. Notably, discrete-time consistency distillation requires the boundary condition $G_\theta(\vx^i,\vx_\text{gt}^{<i},0)\equiv \vx^i$, which is typically implemented by introducing a wraped network $F_\theta$ to parameterize $G_\theta$:
\begin{align}
    G_\theta(\vx^i,\vx_\text{gt}^{<i},t) = c_\text{skip}(t)\vx^i + c_\text{out}(t)F_\theta(\vx^i,\vx_\text{gt}^{<i},t),
\end{align}
where $c_\text{skip}(0)=1, \,c_\text{out}(0) = 0$. However, since we use flow matching, i.e., a $v$-prediction parameterization for the diffusion model $\vv_\theta$, an $x_0$-prediction form for $G_\theta$ already satisfies the required boundary conditions, without any additional design:
\begin{align}
    G_\theta(\vx^i,\vx_\text{gt}^{<i},t) = \vx^i-t\vv_\theta(\vx^i,\vx_\text{gt}^{<i},t).
\end{align}
This simplified design may not be optimal and leaves substantial design space for further exploration~\cite{geng2024consistency,lu2024simplifying, zheng2025large}, which we leave to future work. Fig.~\ref{fig:appendix_cd} visualizes that causal CD outperforms asymmetric CD, where asymmetric CD appears highly blurry and exhibits abrupt artifacts, whereas our results achieve better quality and are more stable. This agrees with Tab.~\ref{tab:ablation} and highlights the necessity of a native causal teacher for autoregressive CD.

\begin{figure}
    \centering
    \begin{tabular}{
        >{\centering\arraybackslash}m{0.1\textwidth} |
        >{\centering\arraybackslash}m{0.58\textwidth} 
    }
    \toprule
    \small{\textbf{Type}} & \small{\textbf{Generated video}}\\
    \midrule
    \small{\textbf{Asymmetric CD}} &
    \includegraphics[width=\linewidth]{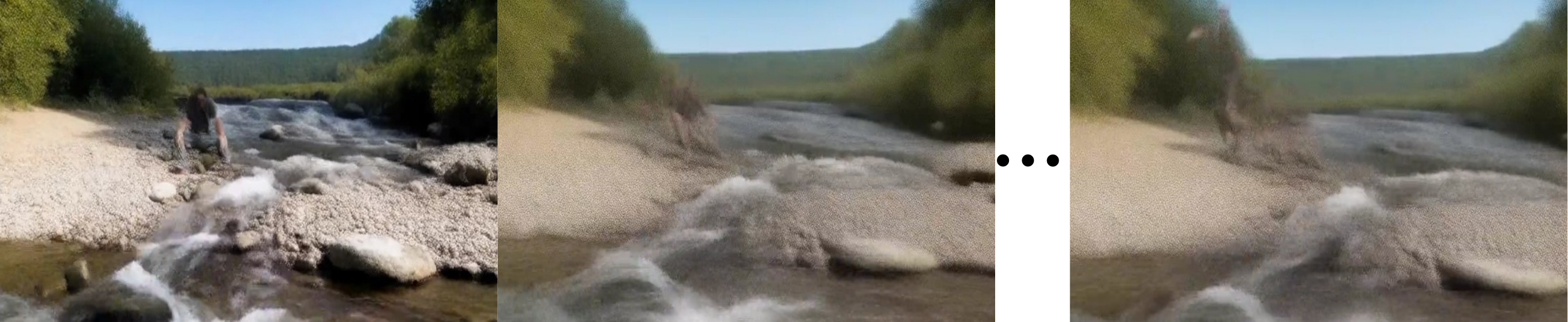} 
    \\
    \midrule
    \small{\textbf{Causal CD}} &
    \includegraphics[width=\linewidth]{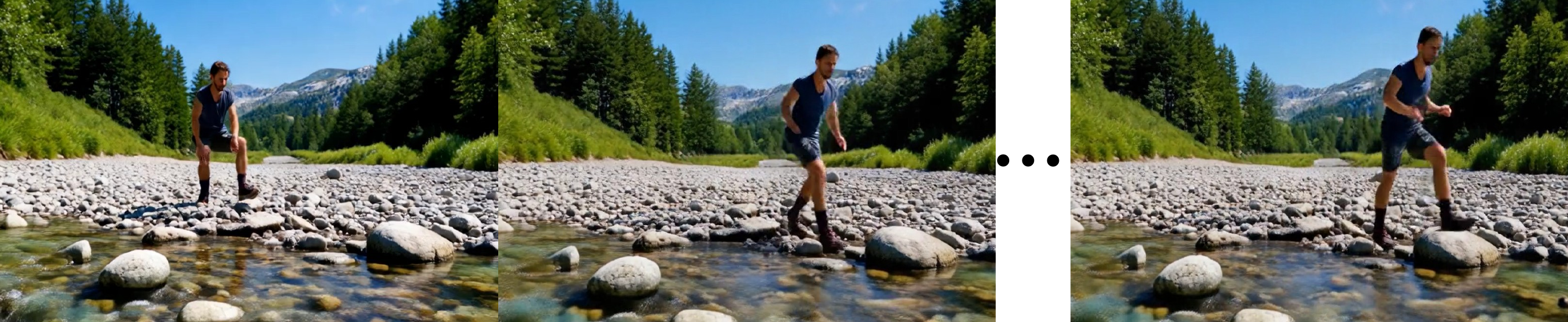} 
    \\
    \bottomrule
    \end{tabular}
\caption{\textbf{Comparison between asymmetric CD and causal CD.} Asymmetric CD appears highly blurry and exhibits abrupt artifacts, whereas causal CD results remain much better quality and more stable. }
\label{fig:appendix_cd}
\end{figure}

\paragraph{Evaluation details.}
In this section, we focus on the setups of Dynamic Degree, VisionReward, and Instruction Following. We use 100 prompts with rich action sequences and dynamics, provided in the supplementary material.

For Dynamic Degree, we use the official VBench evaluation code in the custom-input mode. Note that the Dynamic Degree reported in our table is evaluated on the 100-prompt motion set; however, when computing VBench’s Total, Quality, and Semantic scores, the Dynamic Degree term is still evaluated on the standard VBench official prompts rather than inherited from our custom set. In addition, we use VisionReward to evaluate overall visual quality. Each sub-score can be positive or negative and lies in $[-1, 1]$, where $-1$ indicates the worst quality and $1$ the best. The final VisionReward score is computed as a weighted sum using the official weights. We additionally use VisionReward's prompt-alignment sub-score to evaluate instruction following, by querying VisionReward with the official prompt: \textit{Does the video meet some of the requirements stated in the text "[[prompt]]"?} 

\paragraph{Performance comparison details.}
All baselines use the same spatial resolution as Self Forcing. Throughput and latency on the H100 GPU for the baselines are taken directly from the Self Forcing paper.

\paragraph{Ablation details.}

For autoregressive diffusion training, both teacher forcing and diffusion forcing are trained for 3K steps. For Self Forcing's ODE initialization, we start from the bidirectional diffusion model and train for 3K steps. For causal ODE initialization, we first train a teacher forcing autoregressive diffusion model for 2K steps, and then run an additional 1K-step causal ODE distillation initialized from it. This aligns the training overall computation (both 3K in total) of the two ODE-initialization variants and ensures a fair comparison. Both CD variants are trained for 3K steps as well, each distilled using the teacher forcing autoregressive diffusion model as the teacher, ensuring a fair within-CD comparison. All other settings follow the main experiments.

\end{document}